\documentclass[twoside]{article}

\usepackage{amsmath, amssymb, amsthm, mathtools}
\usepackage{graphicx, subcaption}
\newtheorem{theorem}{Theorem}
\newtheorem{proposition}{Proposition}
\newtheorem{lemma}{Lemma}
\newtheorem{corollary}{Corollary}
\theoremstyle{definition}
\newtheorem{definition}{Definition}

\newtheorem{remark}{Remark}

\usepackage[preprint]{aistats2027}
\usepackage[round]{natbib}

\begin{document}

%
\runningtitle{Identifiability from Stationary Covariance}

%

\twocolumn[

\aistatstitle{Identifiability from Stationary Covariance in Linear SDEs with Reciprocal Interactions and Latent Variables}

\aistatsauthor{Ana Letícia Garcez Vicente \And Saber Salehkaleybar}

\aistatsaddress{Leiden Institute of Advanced Computer Science (LIACS), Leiden University, Netherlands} ]

\begin{abstract}
  Continuous-time stochastic differential equations (SDEs) provide a natural framework for modeling interacting dynamical systems. A central causal question is whether causal effects among observed variables can be identified from observational data drawn from the stationary distribution, given the underlying causal graph. For linear SDEs, existing identifiability results are largely restricted to fully observed systems whose causal graphs contain no directed 2-cycles. We first establish new graphical conditions for the generic identifiability of causal effects in systems that may contain directed 2-cycles, thereby allowing for reciprocal interactions. Next, we address the previously unexplored setting of latent variables by developing an instrumental-variable framework for identifying a target direct causal effect under partial observability.
\end{abstract}

\section{Introduction}

Understanding causal interactions in dynamical systems is a fundamental problem across biology \citep{marbach2012wisdom, lorch2026latent}, economics \citep{hamilton2020time}, and engineering \citep{aastrom2012introduction}. Stochastic differential equations (SDEs), in which the diffusion captures external stochastic perturbations and the drift function describes the deterministic interactions among variables, are a natural model for many such systems that evolve continuously over time \citep{oksendal2003stochastic}. Recovering the drift coefficients shows the local direct causal effects controlling the system dynamics. However, temporal trajectories are either not available or only partially observed in many applications. Instead, data consist of samples collected from a system operating near equilibrium. In this context, stationary SDEs have emerged as an attractive framework for causal modeling \citep{varando2020graphical, lorch2024causal}. 

Under standard stability assumptions, stationary linear SDEs satisfy the continuous Lyapunov equation, which provides an algebraic relationship between the stationary covariance matrix and the unknown drift matrix. This connection enables causal information to be inferred from equilibrium observations without requiring access to the underlying trajectories. Furthermore, graphical continuous Lyapunov models provide a causal graphical representation analogous to structural causal models while naturally accommodating cycles and self-loops, common features 
in real dynamical systems \citep{varando2020graphical, hansen2025trek, fitch2019learning}.

In linear SDEs, a fundamental challenge is whether the drift matrix can be recovered from the stationary covariance matrix when its sparsity pattern is known. Recent work has shown that the drift matrix is globally identifiable when the underlying graph is simple, i.e., when it contains no directed cycles of length two \citep{dettling2023identifiability}. Reciprocal interactions, therefore, constitute the principal obstruction to identifiability. Directed 2-cycles naturally arise in many applications. In biological regulatory networks, for example, mutually activating and mutually repressing components form fundamental feedback motifs responsible for bistability and switch-like behavior \citep{ferrell2002self}. Reciprocal interactions between cortical and subcortical are also ubiquitous in neural circuits \citep{doi:10.1152/physrev.00048.2021}. Similar bidirectional interactions are also present in engineered systems with coupled controllers and feedback loops \citep{aastrom2021feedback}. Whether directed 2-cycles necessarily prevent identification, or whether additional graphical structure can resolve the resulting ambiguity, remains largely unexplored.

Another challenge arises when the system is only partially observed. In many applications, latent variables directly influence the variables of interest, introducing confounding that prevents direct causal effects from being recovered through standard adjustment techniques. Although latent confounding has been extensively studied in structural causal models through instrumental variables, adjustment criteria \cite{perkovic2018complete}, and do-calculus \cite{pearl2009causality}, comparable findings for stationary SDEs are far less established.

In this paper, we study identification from stationary covariance in non-simple linear SDEs through a common rank-based perspective. Directed 2-cycles are allowed throughout. 
Under full observability, we study graphical conditions that guarantee identification of the complete drift matrix through a global rank condition derived from the stationary Lyapunov equation. Under partial observability, latent variables prevent full drift recovery, but suitable instruments lead to a target-row rank condition for identifying coefficients of interest. Both settings follow the same principle: using graphical structure to certify rank conditions for covariance-based 
systems induced by the stationary Lyapunov equation. Our main contributions are:
\begin{itemize}
    \item We establish new graphical conditions for generic identifiability of graphical continuous Lyapunov models containing directed $2$-cycles (Theorem \ref{thm:3v_motif}). We further characterize several identifiable four-node motifs that fall outside these conditions (Proposition \ref{prop:one-vertex-extensions}) and show how identifiability can be localized to identifiable subgraphs within larger non-identifiable systems (Theorem \ref{thm:autonomous-subgraph});
    \item We provide an instrumental-variable framework for identifying direct drift coefficients under latent confounding using only observed covariance matrix (Theorem \ref{thm:ivthm3});
    \item We provide graphical conditions guaranteeing the full-rank requirement of the instrumental covariance matrix. We completely characterize the direct-instrument case via a bipartite matching condition (Theorem \ref{thm:directIV}) and derive a sufficient graphical criterion based on vertex-disjoint directed paths when intermediate variables are present (Theorem \ref{thm:nondirectIV}).
\end{itemize}

\section{Related Work}
\paragraph{Causal modeling with stochastic differential equations.}
Classical causal inference is predominantly formulated through structural causal models \citep{pearl2009causality}. Extending causal reasoning to dynamical systems requires accounting for temporal dynamics and feedback. \cite{hansen2014causal} formalized interventions on stochastic differential equations, while \cite{bongers2018random,mogensen2018causal} developed causal models for general stochastic dynamical systems.

When temporal trajectories are unavailable, inference may rely on stationary observations. Initial work investigated learning directed graphical models from stationary Gaussian observations \citep{fitch2019learning}, while \cite{varando2020graphical} introduced graphical continuous Lyapunov models, in which the sparsity pattern of the drift matrix defines a directed graph. In this representation, a directed edge corresponds to a direct interaction in the drift, while the stationary covariance summarizes the equilibrium dependence induced by the entire dynamical system. Subsequent work established a trek rule providing a graphical characterization of stationary covariance entries \citep{hansen2025trek}, while \cite{lorch2024causal, lorch2026latent} developed methods for causal learning directly from stationary distributions.

A fundamental question is when the drift matrix is identifiable from stationary covariance information. \cite{dettling2023identifiability} proved that graphical continuous Lyapunov models are globally identifiable if and only if the underlying graph is simple, and further established necessary conditions for generic identifiability. Our work complements this line of research by identifying graphical conditions under which non-simple graphs remain generically identifiable and by characterizing local graphical structures that repair the ambiguity introduced by directed reciprocal interactions.

\paragraph{Causal effects under partial observation}
Partial observation is a central challenge in causal inference where unobserved common causes can invalidate adjustment, induce spurious associations, and prevent causal effects from being recovered from the observed distribution alone. In structural causal models, valid adjustment set or do-calculus provide solutions to  identify causal effects in the presence of latent variables \citep{pearl2009causality,richardson2002ancestral,shpitser2006identification,shpitser2012validity,perkovic2018complete}. However, these approaches are mainly developed for acyclic structural causal models where no directed cycle is allowed.

In linear structural equation models, generic identifiability from observational covariance has been studied through graphical rank criteria, including the half-trek criterion \citep{foygel2022half}, instrumental cutsets \citep{kumor2019efficient}, and latent-variable extensions \citep{foygel2012half}. These approaches, like ours, use graphical path structures to certify rank conditions. Our setting differs in that identification is based on stationary covariance constrained by the continuous Lyapunov equation.

For stationary SDEs, considerably less is known. Existing work primarily focused on recovering the complete drift matrix under full observation  \citep{dettling2023identifiability}. In many applications, however, recovering the entire dynamical system is unnecessary; instead, the objective is to identify a particular direct causal effect despite latent confounding. More recently, \cite{van2026sign} investigated sign identifiability from stationary second-order information, including settings with latent variables. Additionally, instrumental processes whose identifying moment equations are built from integrated covariance information have been used to extend instrumental-variable approaches to stochastic processes observed over time \citep{mogensen2023instrumental}. Our second contribution differs from these approaches. Rather than recovering the entire drift matrix, identifying only the sign of causal effects, or relying on trajectory observations to estimate integrated covariances, we study identification of a target drift coefficient under latent confounding. We derive instrumental equations and establish graphical conditions under which the observed stationary covariance uniquely identifies the target causal effects.

\section{Preliminaries}
\subsection{Stochastic differential equations}
Stochastic differential equations (SDEs) provide a natural framework for modeling continuous-time systems evolving both deterministic interactions and random perturbations indexed by node set $V$, with $|V| = d$. Let $\textbf{X}(t) \in \mathbb{R}^d$ denote the state vector at time $t$. Each component $X_v(t)$ represents variable $v \in V$ whose evolution is influenced by other variables in the system as well as by external stochastic perturbations. Our analysis focuses on the linear SDEs:
\begin{equation}
    d\textbf{X}(t)=\textbf{M}\textbf{X}(t)dt+\textbf{D}d\textbf{W}(t),
\end{equation}
which corresponds to the multivariate Ornstein-Uhlenbeck (OU) process. $\textbf{D}$ is the diffusion matrix and $\textbf{W}(t)$ is a $d$-dimensional standard Brownian motion. The matrix $\textbf{M} =(m_{ij})_{i, j \in V}\in \mathbb{R}^{d\times d}$ is the drift matrix and characterizes the deterministic interactions among variables. In particular, the entry $m_{ij}$ quantifies the direct effect of variable $X_j$ on the evolution of $X_i$. Therefore, from a causal perspective, $m_{ij}$ represents the direct causal effect from $X_j$ to $X_i$.

Let $\textbf{C}:= \textbf{DD}^\top$. Under the stability condition that all eigenvalues of $\textbf{M}$ have strictly negative real part, the process admits an unique stationary Gaussian distribution with covariance matrix $\boldsymbol{\Sigma}$ satisfying:
\begin{equation} \label{eq:lyapunov_stat}
    \textbf{M}\boldsymbol{\Sigma}+\boldsymbol{\Sigma} \textbf{M}^\top + \textbf{C} = 0.
\end{equation}
Following \cite{dettling2023identifiability}, we assume that the diffusion matrix $\textbf{C}$ is known.

\subsubsection{Graphical Representation}
The drift matrix can be associated with a directed graph $G = (V, E)$, called the drift graph. Each node in $V$ corresponds to one component of the stochastic process $\textbf{X} = (X_1, \cdots, X_d)$, and a directed edge $X_i \to X_j$ is present whenever the drift coefficient describing the influence of $X_i$ on $X_j$ is nonzero, i.e., $m_{ji} \neq 0$. The variables are partitioned into $V = O \cup L$, where $O$ denotes the observed variables, and $L$ are the unobserved/latent ones.

For a node $x\in V$, we denote the set of parents of $x$ $\operatorname{Pa}(x) = \{v\in V:v\to x\}$ the set of nodes with a direct edge pointing to $x$. Similarly, the set of children of $x$ is $\operatorname{Ch}(x) = \{v\in V:x\to v\}$. A directed path from a node $x$ to $y$ ($x \rightsquigarrow y$) is a sequence of distinct nodes $x=v_0\rightarrow\cdots\rightarrow v_k=y$, such that every consecutive pair of nodes is connected by a directed edge oriented in the same direction. The ancestors of $x$, denoted by $\operatorname{An}(x)$, are the nodes from which there exists a directed path to $x$, including $x$ itself. Conversely, the descendants, $\operatorname{De}(x)$, are the nodes reachable from $x$ through directed paths, including $x$.

The graph $G$ may contain directed cycles. A directed cycle of length $k$ is a sequence of distinct nodes $v_1 \to \cdots \to v_k \to v_1$, with $k \geq 2$. In particular, a cycle of length two, referenced as \textit{a $2$-cycle} throughout this paper, consists of two reciprocal edges $x \to y$ and $y \to x$ representing mutual feedback. A graph without directed cycles is called a Directed Acyclic Graph (DAG). Our framework does not assume acyclicity.

Under the stability assumption, the stationary distribution of the linear SDE driven by a Wiener process is multivariate Gaussian. Without loss of generality, we assume that the stationary mean is zero. Consequently, the stationary distribution is completely characterized by its covariance matrix, and our identifiability analysis can therefore be formulated entirely in terms of stationary covariances. 

A trek between two nodes $x$ and $y$ is a path without colliders, i.e., a path that can be decomposed into two directed paths sharing a common source node $c$, $x\leftarrow\cdots\leftarrow c \rightarrow \cdots \rightarrow y.$ The node $c$ is called the top of the trek. A trek may degenerate into a purely directed path if $c = x$ or $c = y$. Treks play a central role in OU models with Wiener process. Each entry $(i,j)$ of stationary covariance admits a decomposition as sums of contributions over treks connecting the pair $(i,j)$. Therefore, the existence or absence of treks determines whether two variables may exhibit nonzero covariance. In particular, if no trek connects two nodes $X_i$ and $X_j$, then $\Sigma_{ij} = 0$, implying that $X_i$ and $X_j$ are marginally independent \citep{boege2025conditional}.

\subsubsection{Problem Formulation}
Given a directed graph $G = (V, E)$ and a known positive definite diagonal diffusion matrix $\textbf{C}$, we consider the graphical continuous Lyapunov model $\mathcal{M}_{G, \textbf{C}} = \{\boldsymbol{\Sigma} : \textbf{M}\boldsymbol{\Sigma} + \boldsymbol{\Sigma}\textbf{M}^\top + \textbf{C} = 0, supp(\textbf{M})\subseteq E, \textbf{M} \textit{ stable}\}$ where $\operatorname{supp}(\textbf{M}) = \{(i,j):m_{ij} \neq 0\}$. The paper studies two identification problems over $\mathcal{M}_{G, \mathbf{C}}$: (i) recovery of the entire drift matrix under full observability, and (ii) recovery of a target direct causal effect $m_{yx}$ under partial observability. 
\begin{definition}[Global identifiability]
    A parameter $\theta (\textbf{M})$ is globally identifiable in $\mathcal{M}_{G, \textbf{C}}$ if for any two admissible drift matrices $\boldsymbol{\Sigma}(\textbf{M}_1) = \boldsymbol{\Sigma}(\textbf{M}_2) \implies \theta(\textbf{M}_1) = \theta(\textbf{M}_2)$.
\end{definition}
\begin{definition}[Generic identifiability]
    A parameter $\theta (\textbf{M})$ is generically identifiable in $\mathcal{M}_{G, \textbf{C}}$ if there exists a proper algebraic subset $\mathcal{A}$ of the admissible parameter space such that $\boldsymbol{\Sigma}(\textbf{M}_1) = \boldsymbol{\Sigma}(\textbf{M}_2) \implies \theta(\textbf{M}_1) = \theta(\textbf{M}_2)$ for all admissible $\textbf{M}_1, \textbf{M}_2 \notin \mathcal{A}$.
\end{definition}
\paragraph{Kernel matrix and rank criterion} Our analysis of full observability relies on the algebraic characterization of identifiability \cite{dettling2023identifiability}. For $\boldsymbol{\Sigma}\in\mathcal{M}_{G,\mathbf{C}}$, define:
\[
\textbf{H}(\boldsymbol{\Sigma})_{i\rightarrow j,(k,l)}
=
\begin{cases}
-\Sigma_{lj}, & i=k,\\[1mm]
\Sigma_{kj}, & i=l,\\[1mm]
0, & \text{otherwise},
\end{cases}
\]
where the rows are indexed by all ordered pairs $(i,j)\in V \times V$ (corresponding to all possible directed edges, including self-loops), and the columns are indexed by unordered pairs $(k,l)$ with $k<l$. For a graph $G=(V,E)$, let $\textbf{H}(\boldsymbol{\Sigma})_{E^c,\cdot}$ denote the submatrix obtained by restricting the rows of $\textbf{H}(\boldsymbol{\Sigma})$ to those corresponding to the non-edges $E^c$. Throughout the paper, we certify generic identifiability by exhibiting square submatrices (minors) of $\textbf{H}(\boldsymbol{\Sigma})_{E^c,\cdot}$ whose determinants are not identically zero.

The following characterization is the main algebraic criterion used throughout the paper.
\begin{lemma}[{\cite[Lemma~6.4(ii)]{dettling2023identifiability}}]\label{lemma_dettling}
Let $G=(V,E)$ be a directed graph with $|V|=d$. The graphical continuous Lyapunov model $\mathcal{M}_{G,\mathbf{C}}$ is generically identifiable if and only if there exists $\boldsymbol{\Sigma}\in\mathcal{M}_{G,\mathbf{C}}$ such that $\operatorname{rank}\!\left(\textbf{H}(\boldsymbol{\Sigma})_{E^c,\cdot}\right) = \frac{d(d-1)}{2}.
$
Equivalently, generic identifiability can be certified by exhibiting a nonzero $\frac{d(d-1)}{2}\times\frac{d(d-1)}{2}$ minor of $\textbf{H}(\boldsymbol{\Sigma})_{E^c,\cdot}$.
\end{lemma}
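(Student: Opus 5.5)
Lemma~\ref{lemma_dettling} is cited from \cite{dettling2023identifiability}, but it follows directly from the Lyapunov equation, and the plan is to rederive it. The first step is to vectorize equation~\eqref{eq:lyapunov_stat}. For fixed $\boldsymbol{\Sigma}$ and known $\mathbf{C}$, the map $\mathbf{M}\mapsto \mathbf{M}\boldsymbol{\Sigma}+\boldsymbol{\Sigma}\mathbf{M}^\top$ is linear in the entries of $\mathbf{M}$ and takes values in the symmetric matrices. So the set of drift matrices supported on $E$ that are compatible with $\boldsymbol{\Sigma}$ is an affine space: a particular solution $\mathbf{M}_0$ plus the kernel $\{\mathbf{A}: \operatorname{supp}(\mathbf{A})\subseteq E,\ \mathbf{A}\boldsymbol{\Sigma}+\boldsymbol{\Sigma}\mathbf{A}^\top=0\}$.

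The condition $\mathbf{A}\boldsymbol{\Sigma}+\boldsymbol{\Sigma}\mathbf{A}^\top=0$ says exactly that $\mathbf{A}\boldsymbol{\Sigma}$ is skew-symmetric. Since $\boldsymbol{\Sigma}$ is positive definite, hence invertible, we can write $\mathbf{A}=\mathbf{K}\boldsymbol{\Sigma}^{-1}$ with $\mathbf{K}$ skew. This parametrizes the kernel of the unconstrained map by the $d(d-1)/2$ free entries $K_{kl}$, $k<l$.

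The support constraint then requires $(\mathbf{K}\boldsymbol{\Sigma}^{-1})_{ji}=0$ for every non-edge $i\to j$. This is a homogeneous linear system in the $K_{kl}$. The plan is to check that, after the change of variables, its coefficient matrix is $\mathbf{H}(\boldsymbol{\Sigma})_{E^c,\cdot}$. In the definition of $\mathbf{H}$, an entry $\Sigma$ appears where one might expect $\boldsymbol{\Sigma}^{-1}$, so this identification needs care; the intended formulation may instead use $\mathbf{A}=\boldsymbol{\Sigma}^{-1}\mathbf{K}'$-type variables or be transposed. Either way, the kernel is trivial if and only if this matrix has full column rank $d(d-1)/2$. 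Full column rank therefore means $\boldsymbol{\Sigma}$ determines a unique $\mathbf{M}$ among all real matrices supported on $E$.

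To pass from one point to generic identifiability, note that the entries of $\mathbf{H}(\boldsymbol{\Sigma}(\mathbf{M}))$ are rational functions of $\mathbf{M}$, via $\operatorname{vec}\boldsymbol{\Sigma}=-(\mathbf{I}\otimes\mathbf{M}+\mathbf{M}\otimes\mathbf{I})^{-1}\operatorname{vec}\mathbf{C}$. Any $d(d-1)/2$ minor is thus a rational function on the stable, connected open parameter set. If some minor is nonzero at one $\boldsymbol{\Sigma}$, it vanishes only on a proper algebraic subset $\mathcal{A}$, and off $\mathcal{A}$ the fiber is a singleton. This gives the "if" direction.

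For the converse, suppose every maximal minor vanishes at every point. Then the kernel is nontrivial everywhere, so each fiber contains a line $\mathbf{M}+t\mathbf{A}$. Because stability is an open condition, small $t\neq 0$ produces a second stable matrix with the same $\boldsymbol{\Sigma}$. This happens at every point, so identifiability fails on a nonempty open set and no proper algebraic exceptional set can exist.

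The main obstacle is the bookkeeping step: matching the index convention of $\mathbf{H}$, with rows $i\to j$ and columns $(k,l)$, to the linearized support constraints. The sign and transpose choices must be reconciled exactly with the paper's definition.
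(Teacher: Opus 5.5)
The paper gives no proof of this lemma; it imports it from \cite{dettling2023identifiability}. Your overall architecture is the standard kernel argument behind that result:
\begin{itemize}
\item the fiber over $\boldsymbol{\Sigma}$ is $\mathbf{M}_0$ plus the $E$-supported part of the $\binom{d}{2}$-dimensional kernel of $\mathbf{A}\mapsto\mathbf{A}\boldsymbol{\Sigma}+\boldsymbol{\Sigma}\mathbf{A}^\top$;
\item a maximal minor that is nonzero at one stable point is a nonzero polynomial, after clearing a power of $\det(\mathbf{I}\otimes\mathbf{M}+\mathbf{M}\otimes\mathbf{I})$, which does not vanish on stable matrices;
\item for the converse, the perturbation $\mathbf{M}+t\mathbf{A}$ stays stable.
\end{itemize}
In the converse you should also note that the definition requires both matrices to avoid $\mathcal{A}$. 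A proper algebraic set is closed with empty interior, so you can pick a stable $\mathbf{M}_1\notin\mathcal{A}$, and then $\mathbf{M}_1+t\mathbf{A}\notin\mathcal{A}$ for small $t$.

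The gap is the step you flag as the main obstacle, which is the only part specific to this lemma. You never verify that $\mathbf{H}(\boldsymbol{\Sigma})_{E^c,\cdot}$ is the coefficient matrix of the support constraints. The repair you float, $\mathbf{A}=\boldsymbol{\Sigma}^{-1}\mathbf{K}'$, is wrong: $\boldsymbol{\Sigma}^{-1}\mathbf{K}'\boldsymbol{\Sigma}$ is not skew unless $\mathbf{K}'$ commutes with $\boldsymbol{\Sigma}^2$.

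The right parametrization is $\mathbf{A}=\boldsymbol{\Sigma}\mathbf{K}$ with $\mathbf{K}$ skew. Then $\mathbf{A}\boldsymbol{\Sigma}+\boldsymbol{\Sigma}\mathbf{A}^\top=\boldsymbol{\Sigma}(\mathbf{K}+\mathbf{K}^\top)\boldsymbol{\Sigma}=0$, and $\mathbf{K}\mapsto\boldsymbol{\Sigma}\mathbf{K}$ is injective. So, with $e_k$ the $k$th standard basis vector, the matrices $\boldsymbol{\Sigma}(e_ke_l^\top-e_le_k^\top)$, $k<l$, form a basis of the kernel. Their $(j,i)$ entry is $\Sigma_{jk}\delta_{il}-\Sigma_{jl}\delta_{ik}$. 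This is precisely $\mathbf{H}(\boldsymbol{\Sigma})_{i\to j,(k,l)}$ when row $i\to j$ is read as the drift entry $m_{ji}$, which is the paper's edge convention.

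No transpose is needed. Reading the row as $m_{ij}$ would instead give $\mathbf{K}\boldsymbol{\Sigma}$, which is not in the kernel. Hence every $E$-supported kernel element has the form $\sum_{k<l}\kappa_{kl}\boldsymbol{\Sigma}(e_ke_l^\top-e_le_k^\top)$ with $\mathbf{H}(\boldsymbol{\Sigma})_{E^c,\cdot}\kappa=0$, and it is nonzero if and only if $\kappa\neq0$. This closes your argument.

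Your original $\mathbf{K}\boldsymbol{\Sigma}^{-1}$ coordinates would also work. They are related to these by the invertible map $\mathbf{K}\mapsto\boldsymbol{\Sigma}^{-1}\mathbf{K}\boldsymbol{\Sigma}^{-1}$ on skew matrices, so the two constraint matrices differ by an invertible column transformation and have equal rank. That fact has to be stated and checked, not left as ``either way''.
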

\paragraph{Proofs} Proofs of lemmas and theorems are given in the Appendix. 
\begin{figure}[t]
    \centering
    \begin{subfigure}[t]{0.28\linewidth}
        \centering     \includegraphics[page=1,width=\linewidth]{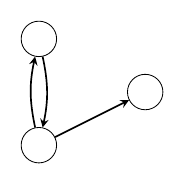}
        \caption{Child extension.}
    \end{subfigure}
    \begin{subfigure}[t]{0.28\linewidth}
        \centering       \includegraphics[page=2,width=\linewidth]{Figures/motifs_3v.pdf}
        \caption{Parent extension.}
    \end{subfigure}
    \caption{Generically identifiable $3$-node motifs.}
    \label{fig:thm1}
\end{figure}

\section{Generic Identifiability under Full Observability}
In this section, we investigate generic identifiability of the drift matrix under full observability. We first characterize a class of non-simple graphs that remain generically identifiable despite containing directed $2$-cycles (Theorem \ref{thm:3v_motif}). 
We then show a class of $4$-node motifs generically identifiable (Proposition \ref{prop:one-vertex-extensions}).

\subsection{Identifiability Beyond Simple Structures}
Generic identifiability of linear stochastic differential equations has been extensively studied through the algebraic characterization of the Lyapunov equation \ref{eq:lyapunov_stat}. \cite{dettling2023identifiability} showed that the drift matrix is globally identifiable if and only if the graph is simple, i.e., contains no $2$-cycles. They also observed that certain non-simple graphs may remain generically identifiable, illustrating this through isolated examples. However, no general graphical characterization of such non-simple identifiable graphs was provided. The goal of this section is to characterize a class of non-simple graphs for which generic identifiability is guaranteed.
\begin{theorem} \label{thm:3v_motif}
    Let $G=(V,E)$ be a directed graph satisfying: (i) Every directed 2-cycle $(i,j)$ is vertex-disjoint from all other 2-cycles; and (ii) For every directed $2$-cycle $(i,j)$, there exists a vertex $k_{ij}$ such that, after possibly exchanging $i$ and $j$, $k_{ij}\sim i$,$k_{ij}\not\sim j$, where $\sim$ denotes adjacency, i.e., there is an edge in at least one direction; and such that $k_{ij}$ does not belong to any directed $2$-cycle. Moreover, the vertices $k_{ij}$ can be chosen pairwise distinct for distinct directed $2$-cycles.
    $\mathcal{M}_{G, C}$ is generically identifiable.
\end{theorem}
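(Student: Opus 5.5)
By Lemma \ref{lemma_dettling}, it suffices to exhibit one $\boldsymbol{\Sigma}\in\mathcal{M}_{G,\mathbf{C}}$ for which $\mathbf{H}(\boldsymbol{\Sigma})_{E^c,\cdot}$ has full column rank $d(d-1)/2$. Since the minors are polynomials in the entries of $\mathbf{M}$ (through $\boldsymbol{\Sigma}(\mathbf{M})$, which is rational in $\mathbf{M}$), it is enough to check nonvanishing at a convenient point or along a convenient limit. I would work near the diagonal drift $\mathbf{M}_0=-\tfrac12\mathbf{C}$, for which $\boldsymbol{\Sigma}_0=\mathbf{I}$ when $\mathbf{C}$ is diagonal. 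I would then perturb the off-diagonal entries on $E$ by a small parameter $\varepsilon$, so that $\boldsymbol{\Sigma}=\mathbf{I}+\varepsilon\boldsymbol{\Sigma}_1+O(\varepsilon^2)$. Here $(\boldsymbol{\Sigma}_1)_{kl}$ is nonzero only when $k\sim l$, by the first-order Lyapunov equation.

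\textbf{Step 1: the case $\boldsymbol{\Sigma}=\mathbf{I}$.} Here the row $i\to j$ of $\mathbf{H}$ has entry $-\delta_{lj}$ at column $(i,l)$ and $\delta_{kj}$ at column $(k,i)$. So the non-edge row $i\to j$ with $i\neq j$ is $\pm e_{\{i,j\}}$, while self-loop rows vanish (self-loops are edges anyway, generically). Every unordered pair $\{i,j\}$ that is not a 2-cycle has at least one non-edge row among $i\to j$ and $j\to i$, and that row equals $\pm e_{\{i,j\}}$. Thus at $\varepsilon=0$ the matrix already covers all columns except those indexed by the 2-cycles $\{i,j\}$, for which both directed edges are present and no row hits $e_{\{i,j\}}$. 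The rank deficit equals the number of 2-cycles, and all the work is in repairing these columns at first order in $\varepsilon$.

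\textbf{Step 2: first-order repair via the motif vertex.} For a 2-cycle $(i,j)$ with witness $k=k_{ij}$, where $k\sim i$ and $k\not\sim j$, both $k\to j$ and $j\to k$ are non-edges. The row $k\to j$ has entries $-\Sigma_{lj}$ at columns $(k,l)$ and $\Sigma_{mj}$ at columns $(m,k)$. Its first-order part contains the term $\varepsilon(\boldsymbol{\Sigma}_1)_{ij}$ at column $\{k,i\}$, which is nonzero generically because $i\sim j$. However, the repaired column must be $\{i,j\}$, so the row to use is one whose entry on $\{i,j\}$ involves $\Sigma_{ki}$. Concretely, the row $j\to k$ has entry $-\Sigma_{ik}$ at column $(j,i)$, i.e.\ on $e_{\{i,j\}}$, which equals $\varepsilon(\boldsymbol{\Sigma}_1)_{ik}\neq0$ since $k\sim i$. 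The zeroth-order part of that row is $\pm e_{\{j,k\}}$. Since $\{j,k\}$ is a non-edge pair, the other row $k\to j$ also gives $\pm e_{\{j,k\}}$ at order zero. I would therefore use both rows and take their combination that cancels the zeroth order, leaving $\varepsilon\big(\pm(\boldsymbol{\Sigma}_1)_{ik}e_{\{i,j\}}+\text{other first-order terms}\big)$.

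The square submatrix I would select consists of one $\pm e_{\{a,b\}}$ row for every non-2-cycle pair, plus, for each 2-cycle, the extra row $j\to k_{ij}$. After row operations, its determinant is $\varepsilon^{c}$ (with $c$ the number of 2-cycles) times the determinant of a $c\times c$ block. That block is formed by the first-order entries of the extra rows on the 2-cycle columns, after eliminating their components on the other columns using the unit rows.

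\textbf{Step 3 (main obstacle): the $c\times c$ block is nonsingular.} The diagonal entry for cycle $(i,j)$ is $\pm(\boldsymbol{\Sigma}_1)_{ik_{ij}}$, which is nonzero. An off-diagonal entry would require the row for cycle $(i,j)$ to meet a column $\{i',j'\}$ of another cycle. The row $j\to k$ only touches columns containing $j$ or $k$. Columns containing $j$ belong only to its own cycle, by vertex-disjointness (i). Columns containing $k$ belong to no cycle, because $k$ lies on no 2-cycle. Hence at first order the block is diagonal, provided the elimination step introduces no cross terms at order $\varepsilon$. It does not, because the eliminated components on non-cycle columns are already $O(\varepsilon)$ and the unit rows are exact up to $O(\varepsilon)$, so any cross contribution is $O(\varepsilon^2)$.

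The distinctness of the $k_{ij}$ ensures the chosen extra rows are distinct. Without it, two cycles sharing the same $k$ could reuse the non-edge pair $\{j,k\}$, creating a dependency when $j$ coincides.

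The delicate part is the bookkeeping of the order-$\varepsilon$ terms: verifying $(\boldsymbol{\Sigma}_1)_{ik}\neq0$ from the linearized Lyapunov equation $\mathbf{M}_1+\mathbf{M}_1^\top+\ldots$, and checking carefully that no competing first-order term cancels the diagonal. For the first point, the equation gives $(\boldsymbol{\Sigma}_1)_{ik}\propto m_{ik}+m_{ki}$, which is nonzero generically. Once this holds, the determinant is $\varepsilon^{c}\prod\pm(\boldsymbol{\Sigma}_1)_{ik_{ij}}+O(\varepsilon^{c+1})\neq0$ for small $\varepsilon$, and Lemma \ref{lemma_dettling} concludes.
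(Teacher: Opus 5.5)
Your proof is correct. The minor you select is essentially the paper's: for a cycle $i\leftrightarrow j$ with witness $k$, the paper also uses the rows $j\to k$ and $k\to j$, the non-edge among $i\to k$ and $k\to i$, and one non-edge row for every other pair. The difference is in how nonvanishing is shown.

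The paper zeroes all off-diagonal coefficients except the cycle and witness edges. This makes $\boldsymbol{\Sigma}$ block diagonal over the disjoint sets $\{i_m,j_m,k_m\}$ and the singletons. It then computes each local $3\times 3$ determinant exactly. A Metzler argument gives $\Sigma_{ik}>0$, and when $k\to i$ the $(j,k)$- and $(j,j)$-entries of the Lyapunov equation rule out cancellation in $\Sigma_{ij}\Sigma_{jk}-\Sigma_{ik}\Sigma_{jj}$. The mixed blocks are handled by strict diagonal dominance.

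You instead keep every edge active and expand around $\boldsymbol{\Sigma}=\mathbf{I}$. The non-cycle columns are covered by exact unit rows at order zero. The cycle block of the combined rows $(j\to k)\mp(k\to j)$ is exactly diagonal, because $j\to k$ meets only columns containing $j$ and $k\to j$ only columns containing $k$. An elimination estimate then gives $\pm\varepsilon^{c}\prod(\boldsymbol{\Sigma}_1)_{ik_{ij}}+O(\varepsilon^{c+1})$. The cancellation the paper must exclude is automatic in your setting, since $\Sigma_{jk}=O(\varepsilon^2)$.

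The paper's route gives an explicit, non-asymptotic certificate; yours is shorter. Notably, your argument never uses that the witnesses are pairwise distinct. If two cycles share a witness $k$, the rows $j\to k$, $k\to j$, $j'\to k$, $k\to j'$ are still distinct because $j\neq j'$ by (i), and the cycle block stays diagonal. So your closing justification of distinctness is moot: that hypothesis serves the paper's block partition, not your expansion.
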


\begin{proof}[Proof sketch]
    By Lemma~1, it suffices to exhibit $\Sigma\in\mathcal M_{G,C}$ for which
    $\mathbf H(\Sigma)_{E^c,\cdot}$ has full column rank.
    
    For each directed $2$-cycle $(i,j)$, let $k$ be its witness from assumption~2 and define the local block $N=\{i,j,k\}$. We specialize the drift matrix by retaining only the two edges of the $2$-cycle and the single directed edge between $k$ and its adjacent endpoint, and set all other allowed off-diagonal coefficients to zero. The retained off-diagonal coefficients are scaled by a sufficiently small $\varepsilon>0$. With negative diagonal coefficients, the resulting drift matrix is Hurwitz.
    
    Under this specialization, the stationary covariance is block diagonal across the sets $N$ and the remaining singleton vertices. Each $2$-cycle block contributes a nonsingular $3\times3$ minor of $\mathbf H(\Sigma)_{E^c,\cdot}$. For columns joining distinct blocks, at least one orientation of every vertex pair is a non-edge. Selecting one such non-edge row for every mixed column gives, as $\varepsilon\to0$, a diagonal minor with positive variance entries. Hence this mixed block is also nonsingular for sufficiently small $\varepsilon>0$.
    
    Combining the local and mixed minors gives a nonzero $\binom d2\times\binom d2$ minor of $\mathbf H(\Sigma)_{E^c,\cdot}$, and Lemma~1 yields generic
    identifiability.
\end{proof}

\subsection{Four-Node Non-Simple Identifiable Motifs}
Theorem \ref{thm:3v_motif} provides a sufficient condition for generic identifiability by requiring that every directed $2$-cycle has a neighboring node adjacent to exactly one of its nodes. This condition is not necessary. We now investigate four-node graphs that violate this condition yet remain generically identifiable.

Identifiability in the presence of a direct $2$-cycle is not governed by adjacency alone. A common parent, a common child, an asymmetric directed path, or an additional $2$-cycle may still fail to repair the rank deficiency created by a $2$-cycle. What matters is how the trek structure of the graph constrains the covariance entries in the matrix $\textbf{H}(\boldsymbol{\Sigma})_{E^c,\cdot}$. By Lemma \ref{lemma_dettling}, when $\lvert V\rvert = d = 4$, it suffices to construct a nonzero $6\times6$ minor of $\textbf{H}(\boldsymbol{\Sigma})_{E^c,\cdot}$.

\subsubsection{Extensions of deficient three-node motifs}

Figure~\ref{fig:three_node_motifs} shows the deficient three-node motifs from which we construct four-node extensions. Adding a vertex is not sufficient by itself; identifiability is restored only when the extension modifies the covariance structure so as to supply the missing independent constraint. The
following proposition summarizes the identifiable one-vertex extensions.
\begin{figure}[h]
    \centering
    \begin{subfigure}[t]{0.20\columnwidth}
        \centering
        \includegraphics[page=3,width=\linewidth]{Figures/motifs_3v.pdf}
        \caption{Common child.}
        \label{fig:motif3_a}
    \end{subfigure}
    \hfill
    \begin{subfigure}[t]{0.20\columnwidth}
        \centering
        \includegraphics[page=4,width=\linewidth]{Figures/motifs_3v.pdf}
        \caption{Common parent.}
        \label{fig:motif3_b}
    \end{subfigure}
    \hfill
    \begin{subfigure}[t]{0.21\columnwidth}
        \centering
        \includegraphics[page=5,width=\linewidth]{Figures/motifs_3v.pdf}
        \caption{Mixed parent child.}
        \label{fig:motif3_c}
    \end{subfigure}
    \hfill
    \begin{subfigure}[t]{0.25\columnwidth}
        \centering
        \includegraphics[page=6,width=\linewidth]{Figures/motifs_3v.pdf}
        \caption{$2$-cycles intersection.}
        \label{fig:motif3_d}
    \end{subfigure}
    \caption{Three-node non-identifiable motifs.}
    \label{fig:three_node_motifs}
\end{figure}

\begin{proposition}[One-vertex extensions of deficient motifs]
    \label{prop:one-vertex-extensions}
    Let $G_0$ be one of the deficient three-node motifs in Figure~\ref{fig:three_node_motifs}, and let $G$ be obtained by adding a fourth node $v$ with exactly one adjacency to $G_0$. 

    The following extensions are generically identifiable: \(v\) is (i) a parent of a vertex of \(G_0\), except in the common-child setup (Figure \ref{fig:motif3_a}), where \(v\) may only be a parent of one vertex in a \(2\)-cycle; (ii) a child of a vertex of \(G_0\); or (iii) part of a directed \(2\)-cycle with a vertex of \(G_0\), except in the \(2\)-cycles intersection setup (Figure \ref{fig:motif3_d}).

\end{proposition}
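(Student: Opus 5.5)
The plan is to use Lemma~\ref{lemma_dettling} exactly as in the proof of Theorem~\ref{thm:3v_motif}. For each admissible extension $G$ it suffices to exhibit one stable $\mathbf M$ with $\operatorname{supp}(\mathbf M)\subseteq E$, compute $\boldsymbol\Sigma(\mathbf M)$, and find a nonzero $6\times 6$ minor of $\mathbf H(\boldsymbol\Sigma)_{E^c,\cdot}$. Since $d=4$, the six columns are indexed by the pairs $\{1,2\},\{1,3\},\{1,4\},\{2,3\},\{2,4\},\{3,4\}$. Throughout, $v=4$ denotes the added node.

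\textbf{Step 1: split the columns.} Columns of the form $\{a,4\}$ with $a\in G_0$ are treated separately from the three columns internal to $G_0$. Because $v$ has exactly one adjacency, two of the three pairs $\{a,4\}$ are non-adjacent in both directions. They therefore contribute four non-edge rows $a\to 4$ and $4\to a$, and their entries involve $\Sigma_{44}$ and $\Sigma_{a4}$. For the adjacent pair $\{u,4\}$, at least one orientation is a non-edge unless $v$ forms a $2$-cycle with $u$.

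The deficiency of $G_0$ means that the rows indexed by non-edges inside $G_0$ give a $3$-column block of rank $2$ for every $\boldsymbol\Sigma$ in its model. The missing independent row must come either from a cross row $a\to 4$ or $4\to a$ that also loads on an internal column, or from internal rows whose entries are altered by the extra covariance terms $\Sigma_{\cdot 4}$.

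\textbf{Step 2: enumerate cases.} I will run through the cases by motif: (a) common child, (b) common parent, (c) mixed, (d) $2$-cycle intersection. Within each motif I take the attachment type (parent, child, or $2$-cycle) and the attachment vertex, and reduce by the symmetries of the motif. This leaves a modest list of graphs. For each, I take generic symbolic values of the retained edges, or a convenient rational choice such as diagonal $-1$ and off-diagonals $\pm\tfrac12$, chosen so that $\mathbf M$ is Hurwitz. I then solve the $10$ linear equations of the Lyapunov system for $\boldsymbol\Sigma$ with $\mathbf C=I$, and select a $6\times6$ minor.

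A perturbative choice keeps this tractable. I scale the edges incident to $v$ by $\varepsilon$, so that $\Sigma_{a4}=O(\varepsilon)$ and $\Sigma_{44}\to c_{44}/(2|m_{44}|)$. Expanding the minor in $\varepsilon$, the leading coefficient factors as (a $3\times3$ or $2\times2$ internal minor)$\times$(variance-type diagonal terms from the cross columns). This mirrors the mixed-block argument of Theorem~\ref{thm:3v_motif}. Where the leading term vanishes, I pass to the first-order term in $\varepsilon$, where $\Sigma_{a4}$ supplies the missing rank.

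\textbf{Step 3: the main obstacle.} The delicate part is the claimed exceptions. These are a parent of a non-cycle vertex in the common-child motif, and a new $2$-cycle attached to the intersection motif. In these cases no minor works, so the proof must show the $O(\varepsilon)$ corrections fail there but succeed in the listed cases. I would prove this by tracking which treks create $\Sigma_{a4}\neq0$.

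When $v$ is a parent, treks $v\to u\rightsquigarrow a$ make $\Sigma_{a4}$ nonzero for all descendants $a$ of $u$. This is what injects the extra row, provided $u$ reaches the $2$-cycle. When $v$ is a child, $\Sigma_{a4}$ is inherited from $\Sigma_{au}$, and the rows $4\to a$ give the needed constraint. For the listed cases, I expect to exhibit the nonvanishing first-order determinant explicitly (e.g.\ proportional to $m_{4u}\,m_{ij}m_{ji}$ times positive variances), possibly verifying the finitely many $6\times6$ determinants symbolically. Since the proposition only asserts identifiability for the listed cases, the exceptions themselves need not be proved non-identifiable.
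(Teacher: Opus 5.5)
Your plan is essentially the paper's proof: both certify each listed extension through Lemma~\ref{lemma_dettling} by exhibiting one admissible parameter point at which a chosen $6\times 6$ minor of $\mathbf H(\boldsymbol\Sigma)_{E^c,\cdot}$ is nonzero. The paper works one common-child case by hand, handles the cases where $v$ is adjacent to exactly one endpoint of the $2$-cycle directly via Theorem~\ref{thm:3v_motif}, and certifies the remaining cases by exact rational computation; as you say, the excluded configurations need not be shown non-identifiable.

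One correction to your $\varepsilon$-heuristic. Each deficient motif has only two internal non-edge rows, so the $\varepsilon^0$ term of every such minor always vanishes. At first order, the missing internal rank can only come from a row $x\to v$ through its entries $\Sigma_{mv}$. It cannot come from the rows $v\to a$, which load only on the columns $\{m,v\}$.
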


These extensions illustrate that identifiability is governed not by adjacency alone, but by whether the added structure supplies new independent covariance constraints. Identifiable configurations, failed extensions, and their rank-based explanations are given in Appendix.

\subsection{Subgraph Identifiability}

The previous section studied identifiability of the entire drift matrix. We now consider subgraph identifiability, asking when the drift coefficients of an induced subgraph can be recovered even if the full graph is not identifiable.
\begin{theorem}[]
\label{thm:autonomous-subgraph}
Let \(G=(V,E)\) be a directed graph associated with the stationary linear SDE. Let \(S\subseteq V\)
Suppose that: (i) \(\mathcal{M}_{G[S],\textbf{C}_{SS}}\) is globally (respectively, generically) identifiable; and (ii) no vertex outside \(S\) is a parent of a vertex in \(S\), that is, $r\to s\notin E$  for every $r\in V\setminus S,\ s\in S.$
Then \(\textbf{M}_{SS}\) is globally (respectively, generically)
identifiable from \(\boldsymbol{\Sigma}_{SS}\), even if the complete drift matrix \(\textbf{M}\) is not identifiable.
\end{theorem}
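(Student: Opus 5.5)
The plan is to show that the $S$-block of the Lyapunov equation \eqref{eq:lyapunov_stat} closes on itself. Write the drift matrix in block form with respect to the partition $V = S \cup R$, where $R = V\setminus S$:
\[
\mathbf{M} = \begin{pmatrix} \mathbf{M}_{SS} & \mathbf{M}_{SR} \\ \mathbf{M}_{RS} & \mathbf{M}_{RR}\end{pmatrix}.
\]
Condition (ii) says that $m_{sr}=0$ for all $s\in S$, $r\in R$, since $m_{sr}\neq 0$ would mean the edge $r\to s$ is present. Hence $\mathbf{M}_{SR}=0$ and $\mathbf{M}$ is block lower triangular.

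Taking the $(S,S)$ block of $\mathbf{M}\boldsymbol{\Sigma}+\boldsymbol{\Sigma}\mathbf{M}^\top+\mathbf{C}=0$ then gives
\[
\mathbf{M}_{SS}\boldsymbol{\Sigma}_{SS}+\mathbf{M}_{SR}\boldsymbol{\Sigma}_{RS}+\boldsymbol{\Sigma}_{SS}\mathbf{M}_{SS}^\top+\boldsymbol{\Sigma}_{SR}\mathbf{M}_{SR}^\top+\mathbf{C}_{SS}=0 .
\]
The two cross terms vanish, so
\[
\mathbf{M}_{SS}\boldsymbol{\Sigma}_{SS}+\boldsymbol{\Sigma}_{SS}\mathbf{M}_{SS}^\top+\mathbf{C}_{SS}=0 .
\]
Here $\mathbf{C}_{SS}$ is diagonal and positive definite because $\mathbf{C}$ is. The support of $\mathbf{M}_{SS}$ lies in the edge set of the induced subgraph $G[S]$. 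Since $\mathbf{M}$ is block triangular, its spectrum is the union of the spectra of $\mathbf{M}_{SS}$ and $\mathbf{M}_{RR}$, so $\mathbf{M}_{SS}$ is stable. Stability of $\mathbf{M}_{SS}$ makes $\boldsymbol{\Sigma}_{SS}$ the unique solution of this reduced Lyapunov equation. Together these facts give $\boldsymbol{\Sigma}_{SS}\in\mathcal{M}_{G[S],\mathbf{C}_{SS}}$, with $\mathbf{M}_{SS}$ an admissible drift for that model.

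\emph{Global case.} Suppose two admissible $\mathbf{M}_1,\mathbf{M}_2$ for $G$ produce the same $\boldsymbol{\Sigma}$, or merely the same $\boldsymbol{\Sigma}_{SS}$. Then $(\mathbf{M}_1)_{SS}$ and $(\mathbf{M}_2)_{SS}$ are admissible drifts in $\mathcal{M}_{G[S],\mathbf{C}_{SS}}$ with equal covariance. Global identifiability of that model forces $(\mathbf{M}_1)_{SS}=(\mathbf{M}_2)_{SS}$. Nothing in this argument depends on $\mathbf{M}_{RS}$ or $\mathbf{M}_{RR}$, which is why identifiability of the full $\mathbf{M}$ is not needed.

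\emph{Generic case, which I expect to be the main obstacle.} Here one must transfer the exceptional set. Let $\mathcal{A}_S$ be the proper algebraic subset of the parameter space of $G[S]$ outside which identifiability holds. Define
\[
\mathcal{A}=\{\mathbf{M}: \mathbf{M}_{SS}\in\mathcal{A}_S\}.
\]
This set is algebraic because it is the preimage of $\mathcal{A}_S$ under the coordinate projection $\mathbf{M}\mapsto\mathbf{M}_{SS}$, and it is proper.

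For properness, the key point is that under (ii) the parameter space of $G$ factors as the entries of $\mathbf{M}_{SS}$ times the free entries of $(\mathbf{M}_{RS},\mathbf{M}_{RR})$. Within the stable set, the projection onto the $\mathbf{M}_{SS}$ coordinates is onto the stable matrices supported on $G[S]$. To see this, given any stable $\mathbf{M}_{SS}$, take $\mathbf{M}_{RS}=0$ and $\mathbf{M}_{RR}$ a negative diagonal matrix. So if a polynomial $f$ vanishes on all of $\mathcal{A}$ and not identically on the $G[S]$ parameters, its pullback is not identically zero on the $G$ parameters. Alternatively, I could use the rank criterion of Lemma~\ref{lemma_dettling} for $G[S]$ directly: a nonzero minor of $\mathbf{H}(\boldsymbol{\Sigma}_{SS})_{E[S]^c,\cdot}$ is a polynomial in $\boldsymbol{\Sigma}_{SS}$, hence a rational function of $\mathbf{M}_{SS}$ alone.

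For $\mathbf{M}_1,\mathbf{M}_2\notin\mathcal{A}$, the restrictions avoid $\mathcal{A}_S$, and the argument of the global case applies. This yields $(\mathbf{M}_1)_{SS}=(\mathbf{M}_2)_{SS}$.
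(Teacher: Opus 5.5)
Your proposal is correct and takes essentially the same route as the paper: you reduce the $(S,S)$ block of the Lyapunov equation using $\mathbf{M}_{SR}=0$, then pull back the exceptional set along $\mathbf{M}\mapsto\mathbf{M}_{SS}$ and get properness by extending any stable $\mathbf{M}_{SS}$ with a stable $\mathbf{M}_{RR}$. Your explicit check that $\mathbf{M}_{SS}$ is stable via the block-triangular spectrum is what places $\boldsymbol{\Sigma}_{SS}$ in $\mathcal{M}_{G[S],\mathbf{C}_{SS}}$, a step the paper leaves implicit; one small wording slip is that in your properness argument the polynomial should vanish on $\mathcal{A}_S$, not on $\mathcal{A}$.
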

\begin{proof}
Let \(R=V\setminus S\). The \((S,S)\)-block of the Lyapunov equation \ref{eq:lyapunov_stat} is
$
\textbf{M}_{SS}\boldsymbol{\Sigma}_{SS} + \textbf{M}_{SR}\boldsymbol{\Sigma}_{RS} + \boldsymbol{\Sigma}_{SS}\textbf{M}_{SS}^\top + \boldsymbol{\Sigma}_{SR}\textbf{M}_{SR}^\top + \textbf{C}_{SS}
=0.
$
By (ii)
, $\textbf{M}_{SR}=0$, and this reduces to
$
\textbf{M}_{SS}\boldsymbol{\Sigma}_{SS} + \boldsymbol{\Sigma}_{SS}\textbf{M}_{SS}^\top + \textbf{C}_{SS} =0.
$
This is  the Lyapunov equation associated with the induced
subgraph \(G[S]\). It involves only \(\textbf{M}_{SS}\), the observable covariance \(\boldsymbol{\Sigma}_{SS}\), and the known diffusion block \(\textbf{C}_{SS}\).

If the induced model is globally identifiable, the equation uniquely determines \(\mathbf{M}_{SS}\) from \(\boldsymbol{\Sigma}_{SS}\). For the generic case, let \(\mathcal{A}_S\) be the proper algebraic exceptional set on which the induced model may fail to be identifiable. Under the projection $\pi(\mathbf{M})=\mathbf{M}_{SS},$ the corresponding exceptional set in the full parameter space is $\mathcal{A}_G=\pi^{-1}(\mathcal{A}_S).$ This set is algebraic and proper: any stable \(\mathbf{M}_{SS}\notin\mathcal{A}_S\) can be extended to a stable full drift matrix of the form $\mathbf{M} = \begin{pmatrix} \mathbf{M}_{SS} & 0\\ \mathbf{M}_{RS} & \mathbf{M}_{RR} \end{pmatrix},$ by choosing \(\mathbf{M}_{RR}\) stable. Thus, outside the proper exceptional set \(\mathcal{A}_G\), the block \(\mathbf{M}_{SS}\) lies outside \(\mathcal{A}_S\) and is uniquely identified from \(\boldsymbol{\Sigma}_{SS}\).
\end{proof}

\section{Causal Effect Identification under Latent Confounding}

Previous section studied identification of the complete drift matrix under full observability. We now consider the same class of potentially non-simple graphs under partial observability, where directed 2-cycles 
remain allowed. Since latent variables prevent full drift recovery from the observed covariance, we focus on identifying a target coefficient from a rank condition. 
Let $G=(V,E)$ denote the graph associated with a stationary diffusion process and let $O\subseteq V$ be the set of observed variables. We write $L := V\setminus O$ for the latent variables. Herein, rather than recovering all model parameters, our objective is to identify causal effects from the observed stationary distribution.
\begin{lemma}\label{lemma2}
The retained covariance $\boldsymbol{\Sigma}_{OO}$ satisfies
$\textbf{M}^O\boldsymbol{\Sigma}_{OO}+\boldsymbol{\Sigma}_{OO}(\textbf{M}^O)^\top+\textbf{C}_{OO}=0$,
where $\textbf{M}^O:=\textbf{M}_{OO}+\textbf{M}_{OL}\boldsymbol{\Gamma}_{L\mid O}$ and $\boldsymbol{\Gamma}_{L\mid O}:=\boldsymbol{\Sigma}_{LO}\boldsymbol{\Sigma}^{-1}_{OO}$
\end{lemma}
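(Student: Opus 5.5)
The plan is to read off the $(O,O)$-block of the stationary Lyapunov equation \eqref{eq:lyapunov_stat}, in the same way as the block argument in the proof of Theorem~\ref{thm:autonomous-subgraph}. The difference is that the coupling term $\mathbf{M}_{OL}$ is no longer assumed to vanish. Instead of discarding it, I will absorb it into an effective drift using the regression coefficient $\boldsymbol{\Gamma}_{L\mid O}$.

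First I will check that $\boldsymbol{\Gamma}_{L\mid O}$ is well defined. Since $\mathbf{M}$ is stable, the stationary covariance has the integral representation $\boldsymbol{\Sigma}=\int_0^\infty e^{\mathbf{M}t}\mathbf{C}e^{\mathbf{M}^\top t}\,dt$. The matrix $\mathbf{C}$ is positive definite diagonal, so for any $u\neq 0$ we have $u^\top\boldsymbol{\Sigma}u=\int_0^\infty \|\mathbf{C}^{1/2}e^{\mathbf{M}^\top t}u\|^2\,dt>0$. Hence $\boldsymbol{\Sigma}\succ 0$. Its principal submatrix $\boldsymbol{\Sigma}_{OO}$ is then also positive definite, so it is invertible.

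Next, partition $V=O\cup L$ and write the $(O,O)$-block of $\mathbf{M}\boldsymbol{\Sigma}+\boldsymbol{\Sigma}\mathbf{M}^\top+\mathbf{C}=0$:
\[
\mathbf{M}_{OO}\boldsymbol{\Sigma}_{OO}+\mathbf{M}_{OL}\boldsymbol{\Sigma}_{LO}+\boldsymbol{\Sigma}_{OO}\mathbf{M}_{OO}^\top+\boldsymbol{\Sigma}_{OL}\mathbf{M}_{OL}^\top+\mathbf{C}_{OO}=0 .
\]
By the definition of $\boldsymbol{\Gamma}_{L\mid O}$ we have $\boldsymbol{\Sigma}_{LO}=\boldsymbol{\Gamma}_{L\mid O}\boldsymbol{\Sigma}_{OO}$. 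Transposing, and using the symmetry of $\boldsymbol{\Sigma}$, gives $\boldsymbol{\Sigma}_{OL}=\boldsymbol{\Sigma}_{OO}\boldsymbol{\Gamma}_{L\mid O}^\top$. Substituting these two identities and grouping the terms gives
\[
(\mathbf{M}_{OO}+\mathbf{M}_{OL}\boldsymbol{\Gamma}_{L\mid O})\boldsymbol{\Sigma}_{OO}+\boldsymbol{\Sigma}_{OO}(\mathbf{M}_{OO}+\mathbf{M}_{OL}\boldsymbol{\Gamma}_{L\mid O})^\top+\mathbf{C}_{OO}=0,
\]
which is exactly the claimed identity with $\mathbf{M}^O=\mathbf{M}_{OO}+\mathbf{M}_{OL}\boldsymbol{\Gamma}_{L\mid O}$. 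The block $\mathbf{C}_{OL}$ never enters, and in any case it is zero because $\mathbf{C}$ is diagonal.

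No step here is a real obstacle; the only point needing care is the invertibility of $\boldsymbol{\Sigma}_{OO}$, which the integral representation settles. I will also remark that the lemma does not claim $\mathbf{M}^O$ is stable. It only asserts the algebraic Lyapunov-type identity, so no spectral argument is needed.
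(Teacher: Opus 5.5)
Your proposal is correct and follows the paper's proof: take the $(O,O)$-block of the Lyapunov equation, substitute $\boldsymbol{\Sigma}_{LO}=\boldsymbol{\Gamma}_{L\mid O}\boldsymbol{\Sigma}_{OO}$ and $\boldsymbol{\Sigma}_{OL}=\boldsymbol{\Sigma}_{OO}\boldsymbol{\Gamma}_{L\mid O}^\top$, and regroup. Your integral-representation argument for $\boldsymbol{\Sigma}\succ 0$ justifies the invertibility of $\boldsymbol{\Sigma}_{OO}$, which the paper simply asserts.
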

By Lemma~\ref{lemma2}, the observed covariance matrix
$\boldsymbol{\Sigma}_{OO}$ satisfies a Lyapunov equation associated
with the effective observed drift matrix $\textbf{M}^O$.
When \((M^O)_{yx}\) is identifiable within the effective observed model, the relation below allows us to study when it coincides with the structural coefficient \(m_{yx}\).
For the target edge $x\to y$,
\[
(M^O)_{yx}
=
m_{yx}
+
\sum_{h\in L}m_{yh}(\Gamma_{L\mid O})_{hx}.
\]
Since $m_{yh}=0$ unless $h\in \operatorname{Pa}(y)$, we obtain $(M^O)_{yx}=m_{yx}$ if and only if $\sum_{h\in L\cap\operatorname{Pa}(y)}
m_{yh}(\Gamma_{L\mid O})_{hx}=0$. 
Therefore, either $m_{yh} = 0$ for all $h\in L$, i.e., $\operatorname{Pa}(y) \cap L = \emptyset$, or the summation vanishes, which is a non-generic condition because it requires an exact cancellation among the free parameters.
\begin{proposition}[Graphical sufficient condition]
Suppose that $(M^O)_{yx}$ is identifiable within the specified
effective observed model. If, for every omitted parent
$h\in L\cap\operatorname{Pa}(y)$, 
$X_h\perp X_x\mid X_{O\setminus\{x\}}$,
then
$(M^O)_{yx}=m_{yx}$,
and hence $m_{yx}$ is identifiable from
$(\boldsymbol{\Sigma}_{OO},\textbf{C}_{OO})$.
\end{proposition}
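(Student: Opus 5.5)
The plan is to combine the decomposition of $(M^O)_{yx}$ stated just before the proposition with the Gaussian characterization of conditional independence through regression coefficients. By Lemma~\ref{lemma2}, the effective observed drift has entry
\[
(M^O)_{yx}=m_{yx}+\sum_{h\in L\cap\operatorname{Pa}(y)} m_{yh}(\Gamma_{L\mid O})_{hx},
\]
since $m_{yh}=0$ whenever $h\notin\operatorname{Pa}(y)$. It therefore suffices to show that $(\Gamma_{L\mid O})_{hx}=0$ for every latent parent $h$ of $y$. We would get this directly from the hypothesis $X_h\perp X_x\mid X_{O\setminus\{x\}}$; it does not rely on any cancellation among the coefficients $m_{yh}$.

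The key step is to read $\boldsymbol{\Gamma}_{L\mid O}=\boldsymbol{\Sigma}_{LO}\boldsymbol{\Sigma}_{OO}^{-1}$ as the matrix of population regression coefficients of $X_L$ on $X_O$. Under stability the stationary law is a centered Gaussian, and $\boldsymbol{\Sigma}_{OO}$ is positive definite because $\mathbf C$ is positive definite. Hence $E[X_h\mid X_O]=\sum_{o\in O}(\Gamma_{L\mid O})_{ho}X_o$. We would then use the standard Gaussian fact that the regression coefficient of $X_x$ in this linear predictor equals
\[
(\Gamma_{L\mid O})_{hx}=\frac{\operatorname{Cov}(X_h,X_x\mid X_{O\setminus\{x\}})}{\operatorname{Var}(X_x\mid X_{O\setminus\{x\}})}.
\]
This identity follows from the Frisch--Waugh--Lovell partialling-out argument, or equivalently from block inversion of $\boldsymbol{\Sigma}_{OO}$ with $x$ separated from $O\setminus\{x\}$. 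The denominator is a Schur complement of the positive definite $\boldsymbol{\Sigma}_{OO}$, so it is strictly positive. For jointly Gaussian variables, the conditional independence hypothesis is equivalent to a zero partial covariance. The numerator therefore vanishes, and so does $(\Gamma_{L\mid O})_{hx}$.

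Substituting back, every term in the sum is zero, which gives $(M^O)_{yx}=m_{yx}$. By assumption $(M^O)_{yx}$ is identifiable from $(\boldsymbol{\Sigma}_{OO},\mathbf C_{OO})$ through the effective Lyapunov equation of Lemma~\ref{lemma2}, so $m_{yx}$ is identifiable as well.

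The main obstacle we anticipate is making the regression-coefficient identity precise. We need to verify that the $x$-entry of the row of $\boldsymbol{\Sigma}_{hO}\boldsymbol{\Sigma}_{OO}^{-1}$ is exactly proportional to the partial covariance of $X_h$ and $X_x$ given $X_{O\setminus\{x\}}$, with a nonzero proportionality factor. We would carry this out explicitly with the block inverse formula, writing $\boldsymbol{\Sigma}_{OO}$ in blocks $\{x\}$ and $O\setminus\{x\}$ and computing the $x$-column of the inverse. Everything else is the substitution above.
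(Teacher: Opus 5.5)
Your proof is correct and matches the paper's argument. The paper also kills the bias term $\sum_{h\in L\cap\operatorname{Pa}(y)} m_{yh}(\Gamma_{L\mid O})_{hx}$ by showing $(\Gamma_{L\mid O})_{hx}=0$. Its appendix lemma proves exactly your identity $(\Gamma_{L\mid O})_{hx}=\Sigma_{hx\mid S}/\Sigma_{xx\mid S}$, with $S=O\setminus\{x\}$, via the block inverse formula, using the Gaussian equivalence between conditional independence and vanishing partial covariance.
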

 
\begin{proof}
The condition implies $(\Gamma_{L\mid O})_{hx}=0$ for every $h\in L\cap \operatorname{Pa}(y)$.
Therefore the bias term in the expression for $(M^O)_{yx}$ vanishes, giving $(M^O)_{yx} = m_{yx}.$
\end{proof}

This condition is useful but limited. When $h\to y\leftarrow x$, the trek-graph separation condition does not hold, since the path $h-y-x$ passes through the observed node $y\notin L\setminus{h}$. Therefore, the graphical criterion does not guarantee $X_h\perp X_x\mid X_{O\setminus{x}}$. This motivates the instrumental-variable approach below.

\subsection{Instrumental Variables} 
Let $P =  \operatorname{Pa}(y) \cap O$, $U = \operatorname{Pa}(y) \cap L$. We represent every diagonal drift coefficient as a self-loop. Consequently, \(y\in\operatorname{Pa}(y)\) and therefore \(y\in P\). The $y$-row of the drift is
\[ 
\sum _{p \in P}m_{yp}X_p + \sum _{u \in U}m_{yu}X_u.
\]
When $U \neq \emptyset$, the observed parent set $P$ is generally not a valid adjustment set because marginalization introduces the bias term derived in the previous section. We therefore seek to recover the direct drift coefficients $m_{yp}$ from observational covariances using instrumental variables. Let $Z=\{z_1,\dots,z_k\}\subseteq O\setminus \operatorname{Pa}(y)$ be a set of observed instrumental variables. In particular, $P\cap Z=\emptyset$, so no instrument has a direct edge into $y$.

\begin{proposition}[Instrumental equations]
    For every \(z\in Z\), the \((y,z)\)-entry of the Lyapunov equation yields
    \[
    \sum_{p\in P} m_{yp}\Sigma_{pz}
    +\sum_{u\in U} m_{yu}\Sigma_{uz}
    +\sum_{q\in \operatorname{Pa}(z)} m_{zq}\Sigma_{yq}= -c_{yz},
    \]
    since \(\mathbf{C}\) is diagonal and \(z\neq y\), so \(c_{yz}=0\).
    Here, \(q\) ranges over all parents of \(z\).
\end{proposition}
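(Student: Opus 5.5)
The plan is to read off the $(y,z)$ entry of the stationary Lyapunov equation \eqref{eq:lyapunov_stat} and then use the sparsity of $\mathbf{M}$ encoded by the drift graph. Fix $z\in Z$; since $Z\subseteq O\setminus\operatorname{Pa}(y)$ and $y\in P\subseteq\operatorname{Pa}(y)$, we have $z\neq y$. Taking the $(y,z)$ entry of $\mathbf{M}\boldsymbol{\Sigma}+\boldsymbol{\Sigma}\mathbf{M}^\top+\mathbf{C}=0$ gives
\[
(\mathbf{M}\boldsymbol{\Sigma})_{yz}+(\boldsymbol{\Sigma}\mathbf{M}^\top)_{yz}+c_{yz}=0 .
\]

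First, expand the two products. We have $(\mathbf{M}\boldsymbol{\Sigma})_{yz}=\sum_{v\in V}m_{yv}\Sigma_{vz}$. By the definition of the drift graph, $m_{yv}\neq0$ only if $v\to y$ (with self-loops representing diagonal entries), that is, only if $v\in\operatorname{Pa}(y)$. Since $\operatorname{Pa}(y)=P\cup U$ is a disjoint split into observed and latent parents, this sum becomes
\[
\sum_{p\in P}m_{yp}\Sigma_{pz}+\sum_{u\in U}m_{yu}\Sigma_{uz}.
\]
Similarly, $(\boldsymbol{\Sigma}\mathbf{M}^\top)_{yz}=\sum_{q\in V}\Sigma_{yq}m_{zq}$, and $m_{zq}\neq0$ only for $q\in\operatorname{Pa}(z)$. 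This gives $\sum_{q\in\operatorname{Pa}(z)}m_{zq}\Sigma_{yq}$, where $q=z$ is included through the self-loop convention.

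Second, handle the diffusion term. Since $\mathbf{C}$ is assumed diagonal (as in the problem formulation) and $z\neq y$, we get $c_{yz}=0$. Substituting and moving $c_{yz}$ to the right-hand side yields the stated identity.

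There is no real obstacle here. The only point needing care is bookkeeping: the diagonal coefficients $m_{yy}$ and $m_{zz}$ must be counted exactly once each, via $y\in P$ and $z\in\operatorname{Pa}(z)$. We must also confirm $z\neq y$, which is exactly what justifies $c_{yz}=0$. Symmetry of $\boldsymbol{\Sigma}$ is not even needed for this step, though it matters later when these equations are used as an instrumental system.
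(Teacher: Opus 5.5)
Your proposal is correct and matches the paper's proof: you take the $(y,z)$ entry, expand $(\mathbf{M}\boldsymbol{\Sigma})_{yz}$ and $(\boldsymbol{\Sigma}\mathbf{M}^\top)_{yz}$ using the drift sparsity (splitting $\operatorname{Pa}(y)$ into $P$ and $U$), and drop $c_{yz}$ because $\mathbf{C}$ is diagonal and $z\neq y$. You also justify $z\neq y$ explicitly from $y\in P\subseteq\operatorname{Pa}(y)$ and $Z\cap\operatorname{Pa}(y)=\emptyset$, which the paper leaves implicit.
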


The last summation depends on the incoming coefficients of $z$. Let $b_{z} := \sum_{q \in \operatorname{Pa}(z)}m_{zq}\Sigma_{yq}$. 
\begin{lemma}[Root instruments]
    Suppose $z \in O$ has no incoming edges except its self-loop. Then $b_{z} = m_{zz}\Sigma_{yz}$, where 
    $
    m_{zz} = -\frac{c_{zz}}{2\Sigma_{zz}},
    $
    is identifiable from the $(z, z)$-entry of the Lyapunov equation.
\end{lemma}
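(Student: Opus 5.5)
The plan is to read both claims off the Lyapunov equation \eqref{eq:lyapunov_stat}, entry by entry, using only that $z$ has no parents other than itself.

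\emph{Step 1: the form of $b_z$.} By definition, $b_z=\sum_{q\in\operatorname{Pa}(z)} m_{zq}\Sigma_{yq}$. Self-loops are represented as edges, and $z$ has no other incoming edges, so $\operatorname{Pa}(z)=\{z\}$. The sum therefore has the single term $b_z=m_{zz}\Sigma_{yz}$. Here $\Sigma_{yz}=\Sigma_{zy}$ by symmetry of $\boldsymbol{\Sigma}$, and this entry is observed because $y,z\in O$.

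\emph{Step 2: the $(z,z)$-entry.} The $(z,z)$-entry of \eqref{eq:lyapunov_stat} reads
\[
\sum_{q\in V} m_{zq}\Sigma_{qz}+\sum_{q\in V}\Sigma_{zq}m_{zq}+c_{zz}=0 .
\]
Since $m_{zq}=0$ for every $q\neq z$, both sums reduce to $m_{zz}\Sigma_{zz}$, giving $2m_{zz}\Sigma_{zz}+c_{zz}=0$. In particular, no latent covariances $\Sigma_{hz}$ with $h\in L$ enter, because their coefficients $m_{zh}$ vanish.

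\emph{Step 3: solving for $m_{zz}$.} Stationarity gives $\boldsymbol{\Sigma}\succ 0$, since $\mathbf{C}$ is positive definite and $\boldsymbol{\Sigma}=\int_0^\infty e^{t\mathbf{M}}\mathbf{C}e^{t\mathbf{M}^\top}dt$. Hence $\Sigma_{zz}>0$, and we may divide to get $m_{zz}=-c_{zz}/(2\Sigma_{zz})$. This depends only on the known $c_{zz}$ and the observed $\Sigma_{zz}$, so $m_{zz}$ is globally identifiable. Consequently $b_z=-c_{zz}\Sigma_{yz}/(2\Sigma_{zz})$ is a known function of $\boldsymbol{\Sigma}_{OO}$.

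The only step needing care is Step 2: one must check that the latent block contributes nothing to this entry, which holds precisely because $z$ is a root. The positivity of $\Sigma_{zz}$ in Step 3 is the only other point to verify.
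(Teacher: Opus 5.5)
Your proposal is correct and follows essentially the same route as the paper: $\operatorname{Pa}(z)=\{z\}$ collapses $b_z$ to $m_{zz}\Sigma_{yz}$, and the $(z,z)$-entry of the Lyapunov equation reduces to $2m_{zz}\Sigma_{zz}+c_{zz}=0$, which you solve using $\Sigma_{zz}>0$. Your integral-representation argument for $\boldsymbol{\Sigma}\succ 0$ and your remark that no latent covariances enter the $(z,z)$-entry justify steps the paper treats briefly, but the argument is the same.
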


Throughout the remainder of this section, we assume that every instrument is a root node. We now impose an instrumental validity condition. The hidden parents of $y$ must be marginally independent with the instruments, that is, $\Sigma_{uz} = 0$ for all $u\in U, z \in Z$. Graphically, this condition holds generically when there is no trek between $u$ and $z$. Conversely, if a trek between $u$ and $z$ exists, then $\Sigma_{uz}$ is generically nonzero.
\begin{lemma}[Linear identification system]
\label{lemma:linearidsys}
    Suppose $\Sigma_{uz} = 0$, for all $u\in U, z\in Z$, and suppose $b_z$ is known for every $z \in Z$. Then $\sum_{p\in P}m_{yp}\Sigma_{pz} = -b_z$.

    Collecting all instruments gives
    $\boldsymbol{\Sigma}^\top_{PZ}m_{yP} = -b, \quad b = (b_{z_1}, \cdots, b_{z_k})^\top$. The unknown vector is $m_{yP} = (m_{yp})_{p \in P} \in \mathbb{R}^{\lvert P \rvert}$.
\end{lemma}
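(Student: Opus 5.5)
The plan is to specialize the instrumental equations from the preceding proposition (the $(y,z)$-entry of the Lyapunov equation) under the two hypotheses, then stack them over $z\in Z$.

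\emph{Step 1 (one instrument).} Fix $z\in Z$. The $(y,z)$-entry of $\mathbf M\boldsymbol\Sigma+\boldsymbol\Sigma\mathbf M^\top+\mathbf C=0$ reads $\sum_{v}m_{yv}\Sigma_{vz}+\sum_{q}m_{zq}\Sigma_{yq}+c_{yz}=0$.
\begin{itemize}
\item Since $m_{yv}=0$ unless $v\in\operatorname{Pa}(y)=P\cup U$ (the self-loop $y$ lies in $P$), the first sum splits into a sum over $P$ and a sum over $U$.
\item The second sum is $b_z$ by definition.
\item Because $\mathbf C$ is diagonal and $z\neq y$ (as $z\notin\operatorname{Pa}(y)\ni y$), we have $c_{yz}=0$.
\end{itemize}
Imposing the validity hypothesis $\Sigma_{uz}=0$ for all $u\in U$ removes the latent sum. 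This leaves $\sum_{p\in P}m_{yp}\Sigma_{pz}=-b_z$. Because $b_z$ is assumed known, this is a linear equation whose only unknowns are the entries $m_{yP}$. Its coefficients $\Sigma_{pz}$ involve only observed indices, since $P,Z\subseteq O$.

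\emph{Step 2 (stacking).} Collect the $k$ equations, one for each $z_j\in Z$. The $j$-th equation is the inner product of the column $(\Sigma_{pz_j})_{p\in P}$ of $\boldsymbol\Sigma_{PZ}$ with $m_{yP}$. Stacking these gives $\boldsymbol\Sigma_{PZ}^\top m_{yP}=-b$ with $b=(b_{z_1},\dots,b_{z_k})^\top$, which is the claimed system.

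\emph{Expected obstacle.} The argument is essentially bookkeeping. The only points requiring care are confirming that $y\neq z$, so that $c_{yz}$ vanishes, and that the parent set decomposes exactly into $P\cup U$ once self-loops are included. Both follow directly from the definitions, since $y\in P$ and $P\cap Z=\emptyset$.
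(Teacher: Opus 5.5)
Your proposal is correct and follows essentially the same route as the paper: take the $(y,z)$-entry of the Lyapunov equation, eliminate the latent-parent terms using $\Sigma_{uz}=0$, identify the contribution of the incoming coefficients of $z$ as $b_z$, and stack the resulting equations over $z\in Z$ to obtain $\boldsymbol{\Sigma}_{PZ}^\top m_{yP}=-b$. Your explicit checks that $z\neq y$ (so $c_{yz}=0$) and that $\operatorname{Pa}(y)=P\cup U$ make precise what the paper leaves to the preceding instrumental-equations proposition.
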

\begin{proof}
    From the $(y,z)$-entry of the Lyapunov equation, the instrumental validity assumption implies that $\Sigma_{uz} = 0$ for every $u \in U, z \in Z$, vanishing the hidden-parent terms. Moreover, because of the root instruments, the remaining known contribution is captured by $b$. Hence, $\sum_{p \in P} m_{yp}\Sigma_{pz} + b_z = 0$.
\end{proof}

Since $\operatorname{rank}(\boldsymbol{\Sigma}_{PZ})\leq \min\{|P|,|Z|\}$, full row rank requires $|Z|\geq |P|$. Recall that $P$ includes $y$ itself because the diagonal drift coefficient $m_{yy}$ is treated as a self-loop coefficient.
\begin{theorem}[Identification from instrumental variables]
\label{thm:ivthm3}
Assume that $\Sigma_{uz}=0$ for every $u\in U$ and $z\in Z$,
$b_z$ is known for every $z\in Z$,
$\operatorname{rank}(\boldsymbol{\Sigma}_{PZ})=|P|$,
and $P\cap Z=\emptyset$.
Then $m_{yP}$ is uniquely identifiable from observational covariance.
In particular, $m_{yx}$ is identifiable for every $x\in P$.
\end{theorem}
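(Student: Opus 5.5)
The plan is to reduce the statement to elementary linear algebra through Lemma~\ref{lemma:linearidsys}, then check that every ingredient of the resulting system is a functional of observed quantities. First, take the $(y,z)$-entry of the Lyapunov equation~\eqref{eq:lyapunov_stat}, as in the instrumental equations proposition, for each $z\in Z$. Because $P\cap Z=\emptyset$, each $z\neq y$, and since $\mathbf{C}$ is diagonal we have $c_{yz}=0$. The hidden-parent terms $\sum_{u\in U} m_{yu}\Sigma_{uz}$ vanish by the validity assumption $\Sigma_{uz}=0$. Stacking the $k$ equations then gives, by Lemma~\ref{lemma:linearidsys}, the linear system
\[
\boldsymbol{\Sigma}_{PZ}^\top m_{yP} = -b .
\]
Here $m_{yP}$ is the unknown vector, including the self-loop coefficient $m_{yy}$, since $y\in P$.

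Next I would argue that the coefficient matrix and the right-hand side are observable. Since $P\subseteq O$ and $Z\subseteq O$, the block $\boldsymbol{\Sigma}_{PZ}$ is a submatrix of $\boldsymbol{\Sigma}_{OO}$. The vector $b$ is known by hypothesis. Concretely, under the standing root-instrument assumption, the root-instrument lemma gives $b_z=m_{zz}\Sigma_{yz}$ with $m_{zz}=-c_{zz}/(2\Sigma_{zz})$, which depends only on $\boldsymbol{\Sigma}_{OO}$ and the known $\mathbf{C}_{OO}$.

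Finally, uniqueness follows from the rank hypothesis. The matrix $\boldsymbol{\Sigma}_{PZ}^\top$ is $|Z|\times|P|$ with rank $|P|$, so it has full column rank and its kernel is trivial. Suppose two admissible drift matrices $\mathbf{M}_1,\mathbf{M}_2$ induce the same observed covariance $\boldsymbol{\Sigma}_{OO}$ and both satisfy the assumptions. Then both $y$-rows solve the same system, their difference lies in this kernel, and hence the difference is zero. Explicitly,
\[
m_{yP} = -\bigl(\boldsymbol{\Sigma}_{PZ}\boldsymbol{\Sigma}_{PZ}^\top\bigr)^{-1}\boldsymbol{\Sigma}_{PZ}\, b ,
\]
where the Gram matrix is invertible by the rank assumption. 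Reading off coordinates gives $m_{yx}$ for each $x\in P$.

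The only delicate point is making sure every quantity in the system is genuinely a function of $(\boldsymbol{\Sigma}_{OO},\mathbf{C}_{OO})$ and is not contaminated by latent terms. That holds here because the instruments are roots and the hidden-parent covariances vanish exactly, so there is no real obstacle beyond this bookkeeping.
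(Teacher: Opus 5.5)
Your proposal is correct and follows the paper's proof: reduce via Lemma~\ref{lemma:linearidsys} to $\boldsymbol{\Sigma}_{PZ}^\top m_{yP}=-b$, use the rank hypothesis to get full column rank (equivalently, invertibility of $\boldsymbol{\Sigma}_{PZ}\boldsymbol{\Sigma}_{PZ}^\top$ in the normal equations), and conclude that the true coefficient vector is the unique solution. Your added check that $\boldsymbol{\Sigma}_{PZ}$ and $b$ depend only on $(\boldsymbol{\Sigma}_{OO},\mathbf{C}_{OO})$ spells out what the paper leaves implicit.
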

\begin{proof}
    By Lemma \ref{lemma:linearidsys}, $m_{yP}$ satisfies $\boldsymbol{\Sigma}_{PZ}^\top m_{yP} = - b$. The matrix $\boldsymbol{\Sigma}_{PZ} \in \mathbb{R}^{\lvert P \rvert \times \lvert Z \rvert}$. Since $\operatorname{rank}(\boldsymbol{\Sigma}_{PZ}) = \lvert P \rvert$, the matrix $\boldsymbol{\Sigma}_{PZ}^\top$ has full column rank. Therefore 
    $\boldsymbol{\Sigma}_{PZ}\boldsymbol{\Sigma}_{PZ}^\top m_{yP} = - \boldsymbol{\Sigma}_{PZ}b,$ where $\boldsymbol{\Sigma}_{PZ}\boldsymbol{\Sigma}_{PZ}^\top$ is invertible.
    Thus, the linear system has at most one solution for $m_{yP}$. Since the true structural coefficient vector satisfies the system, the solution is unique. Thus every coefficient $m_{yp}$, $p \in P$, is identifiable. In particular, $m_{yx}$ is identifiable. 
\end{proof}

We now aim to give a graphical condition ensuring the rank assumption. 

\subsection{Graphical Conditions for the Instrument Rank Assumption}
The identification result above requires $\operatorname{rank}(\boldsymbol{\Sigma}_{PZ})=|P|$. We now provide graphical conditions under which the rank assumption holds generically. The structure of the result depends on whether the instruments are directly connected to $P\setminus\{y\}$ or whether directed paths contain intermediate vertices.

When there are no intermediate vertices, the problem reduces to a bipartite matching condition and admits an exact characterization through Hall's theorem \citep{hall1987representatives}. With intermediate vertices, a system of vertex-disjoint directed paths provides a sufficient condition for generic full rank. The latter condition is not necessary in general.

\subsubsection{Direct Instruments}
We first consider the case in which every directed path from an instrument to $P\setminus\{y\}$ consists of a single edge.
\begin{theorem}[Graphical rank characterization without intermediate vertices]\label{thm:directIV}
    Let $P' = \{p_1, \cdots, p_r\}=P\setminus\{y\}$ and let $Z$ be a set of root instruments. Assume that no $z \in Z$ has a direct edge to $y$, and every directed path from $Z$ to $P'$ consists of a single edge $z \to p$. Define the bipartite graph \(B=(Z,P',E_B)\), where \((z,p)\in E_B\iff z\to p\in E\).
    
    Then, generically, $\operatorname{rank}(\boldsymbol{\Sigma}_{PZ})=|P|$ if and only if there exist a parent $a \in P'$ and two distinct instruments $z', z'' \in Z$ such that (i) $z' \to a$ and $z'' \to a$ with $m_{z'z'} \neq m_{z''z''}$; and (ii) after deleting $a$, $z'$, and $z''$ from $B$, the remaining bipartite graph contains a matching covering $P'\setminus\{a\}$.
\end{theorem}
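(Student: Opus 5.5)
The plan is to compute $\boldsymbol{\Sigma}_{PZ}$ explicitly for root instruments, and then reduce the rank question to a determinant expansion governed by the bipartite graph $B$.

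\textbf{Structure of the instrument columns.} Fix a root instrument $z$. Since $z$ has no incoming edges except its self-loop, the $(w,z)$-entries of the Lyapunov equation (\ref{eq:lyapunov_stat}) with $w\neq z$ read
\[
\sum_{q} m_{wq}\Sigma_{qz}+m_{zz}\Sigma_{wz}+m_{wz}c_{zz}\cdot 0 + m_{wz}\Sigma_{zz}=0 .
\]
Hence the column $\Sigma_{\cdot z}$ off $z$ equals $-(\mathbf M_{-z,-z}+m_{zz}\mathbf I)^{-1}\mathbf M_{-z,z}\Sigma_{zz}$.

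Two consequences follow.
\begin{itemize}
\item[(a)] Every trek from $z$ is a directed path out of $z$. Because every directed path from $Z$ to $P'$ is a single edge, generically $\Sigma_{pz}\neq0$ if and only if $z\to p$ for $p\in P'$. I will check this by a small-coupling expansion, giving $\Sigma_{pz}\approx -m_{pz}\Sigma_{zz}/(m_{pp}+m_{zz})$. So the support of $\boldsymbol{\Sigma}_{P'Z}$ is exactly $E_B$.
\item[(b)] The $y$-row is determined by the $P'$-rows. Using $z\notin\operatorname{Pa}(y)$ and $\Sigma_{uz}=0$, the $(y,z)$-entry (the instrumental equation with $b_z=m_{zz}\Sigma_{yz}$) gives
\[
\Sigma_{yz}=-\frac{1}{m_{yy}+m_{zz}}\sum_{p\in P'}m_{yp}\Sigma_{pz}.
\]
The $y$-row is thus a combination of the $P'$-rows whose weights depend on the column through $m_{zz}$.
\end{itemize}

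\textbf{Sufficiency.} Take a square minor with columns $z'$, $z''$, and the instruments of a matching $\mu$ covering $P'\setminus\{a\}$. The steps are as follows.
\begin{enumerate}
\item Add $\sum_p \frac{m_{yp}}{m_{yy}+m_{z'z'}}(\text{row }p)$ to the $y$-row. This kills the $z'$-entry.
\item Each remaining entry of the $y$-row becomes $\sum_p m_{yp}\Sigma_{pz}\bigl(\frac{1}{m_{yy}+m_{z'z'}}-\frac{1}{m_{yy}+m_{zz}}\bigr)$.
\item Specialize $m_{yp}=0$ for $p\neq a$; this is allowed generically, since nonvanishing of a polynomial at one point suffices. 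The $y$-row is then supported on the columns adjacent to $a$.
\item Expand along the $y$-row. The term from column $z''$ carries the factor $m_{ya}\Sigma_{az''}\bigl(\frac{1}{m_{yy}+m_{z'z'}}-\frac{1}{m_{yy}+m_{z''z''}}\bigr)$, which is nonzero by (i). Its cofactor has leading term $\pm\Sigma_{az'}\prod_{p}\Sigma_{p\mu(p)}$, nonzero by (ii).
\item To prevent cancellation against the other terms, scale the non-matching edges $z\to p$ by $\varepsilon$. The displayed monomial then dominates as $\varepsilon\to 0$.
\end{enumerate}
This gives a nonzero $|P|\times|P|$ minor, so $\operatorname{rank}(\boldsymbol{\Sigma}_{PZ})=|P|$ generically.

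\textbf{Necessity (expected main obstacle).} I would apply the generalized Laplace expansion to an arbitrary $|P|\times|P|$ minor after the row reduction above, with $z_0$ the pivot column. Each surviving term picks:
\begin{itemize}
\item a column $z''$ for the $y$-row, with entry $\sum_p m_{yp}\Sigma_{pz''}\bigl(\frac{1}{m_{yy}+m_{z_0z_0}}-\frac{1}{m_{yy}+m_{z''z''}}\bigr)$;
\item a perfect matching between $P'$ and the other columns, using only $B$-edges by (a).
\end{itemize}
For each $p$ with $\Sigma_{pz''}\neq 0$, set $a:=p$. The column matched to $a$ supplies $z'$, and the rest of the matching covers $P'\setminus\{a\}$ while avoiding $z'$ and $z''$.

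So if no triple $(a,z',z'')$ with $m_{z'z'}\neq m_{z''z''}$ and a residual matching exists, every term has either a structurally zero factor or a vanishing difference factor. The minor is then identically zero.

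The delicate points are two. First, I must treat the case where all columns adjacent to the relevant $a$ share the same $m_{zz}$: there the $y$-row is an exact combination of the $P'$-rows restricted to those columns. Second, I must check that the pivoting does not hide terms. To handle the second, I would phrase the argument as a Cauchy–Binet factorization
\[
\boldsymbol{\Sigma}_{PZ}=\begin{pmatrix} w(m_{zz})^\top \\ \mathbf I\end{pmatrix}\text{-weighted }\boldsymbol{\Sigma}_{P'Z},
\]
and I expect this bookkeeping to be the hardest part.
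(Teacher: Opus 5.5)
Your proposal is correct and follows essentially the paper's route. For sufficiency you specialize to the matching edges, so the minor reduces to $\prod_p \Sigma_{p\mu(p)}$ times a $2\times 2$ block carrying the factor $\frac{1}{m_{yy}+m_{z'z'}}-\frac{1}{m_{yy}+m_{z''z''}}$ from the $(y,z)$ Lyapunov identity. For necessity you extract $(a,z',z'')$ from a matching of $P'$ into all but one column $z''$ of a nonsingular minor, with $z''$ forced to hit some $a\in P'$; the paper does the same extraction via $\operatorname{rank}\boldsymbol{\Sigma}_{P'\widetilde Z}=r$ rather than a Laplace expansion, and sets the extra edges to zero rather than scaling them by $\varepsilon$.

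The two ``delicate points'' you flag are not real obstacles. Equal self-drifts form a proper algebraic subset, so $m_{z'z'}\neq m_{z''z''}$ holds automatically in the generic statement, as the paper notes. The row operation preserves the determinant exactly, so nothing is hidden, and no Cauchy--Binet bookkeeping is needed. You can even skip the row operation and expand along the unreduced $y$-row, since $\Sigma_{yz''}\not\equiv 0$ already forces $z''\to a$ for some $a\in P'$ by your identity (b).
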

\begin{proof}[Proof sketch]
For sufficiency, we specialize the model parameters according to the prescribed matching and the two instruments adjacent to $a \in P'$, and show that the resulting covariance matrix contains a nonzero $|P|\times|P|$ minor. The matching guarantees independent contributions from all parents except $a$, while the Lyapunov equation shows that the two instruments incident to $a$ contribute an additional independent direction whenever $m_{z'z'}\neq m_{z''z''}$.
For necessity, generic full rank of $\boldsymbol{\Sigma}_{PZ}$ first implies that $\boldsymbol{\Sigma}_{P'Z}$ has generic full rank, which is equivalent to the existence of a matching covering $P'$ in the bipartite graph $B$. The remaining rank contribution must then come from the row indexed by $y$, and the Lyapunov equation shows that this is possible only if two instruments share a parent and have distinct self-drifts. 
\end{proof}

\subsubsection{Intermediate Vertices}
When directed paths from $Z$ to $P\setminus\{y\}$ contain intermediate vertices, the support pattern of $\boldsymbol{\Sigma}_{PZ}$ is no longer represented by a
bipartite graph alone. Nevertheless, a system of vertex-disjoint directed paths provides a sufficient graphical condition for generic full rank.
\begin{theorem}[Sufficient graphical rank condition with intermediate nodes]\label{thm:nondirectIV}
    Let $P' = \{p_1, \cdots, p_r\}=P\setminus\{y\}$ and let $Z$ be a set of root instruments. Assume that no $z \in Z$ has a direct edge to $y$. 

    Then, generically, $\operatorname{rank}(\boldsymbol{\Sigma}_{PZ})=|P|$ if there exist a parent $a \in P'$ and two distinct instruments $z', z'' \in Z$ such that (i) $\pi': z' \rightsquigarrow a$ and $\pi'': z'' \rightsquigarrow a$ with $m_{z'z'} \neq m_{z''z''}$; (ii) there exist distinct instruments $w_1, \cdots, w_{r-1} \in Z \setminus \{z', z''\}$ and pairwise vertex-disjoint direct paths $\rho_i:w_i \rightsquigarrow q_i$ for $i = 1, \cdots, r-1$ whose endpoints satisfy $\{q_1, \cdots q_{r-1}\} = P'\setminus \{a\}$; and (iii) each $\rho_i$ is vertex-disjoint from $\pi' \cup \pi''$.
\end{theorem}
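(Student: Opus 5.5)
Since $\det$ of a fixed $|P|\times|P|$ submatrix of $\boldsymbol{\Sigma}_{PZ}$ is a rational function of the free drift parameters (the stationary covariance is a rational function of $\mathbf M$ for fixed $\mathbf C$), I would prove generic full rank by exhibiting a single stable specialization of $\mathbf M$, compatible with $G$, at which this minor is nonzero, exactly as in the sufficiency part of Theorem~\ref{thm:directIV}. I use the columns $Z^\ast=\{z',z'',w_1,\dots,w_{r-1}\}$, which gives $r+1=|P|$ columns. Concretely, I keep only the self-loops together with the edges lying on $\pi'$, $\pi''$ and $\rho_1,\dots,\rho_{r-1}$. I also keep the edge $a\to y$ (which exists since $a\in P'$) and the edges $q_i\to y$. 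All these retained off-diagonal coefficients are scaled by a small $\varepsilon>0$, and all other allowed off-diagonal coefficients are set to zero. With negative diagonal entries the specialized matrix is Hurwitz for $\varepsilon$ small. The vertex-disjointness in (ii)--(iii) means the retained graph is a disjoint union of the path systems $\rho_i$ (ending in $q_i$) and the ``Y''-shaped subgraph $\pi'\cup\pi''$ (ending in $a$), all feeding into $y$. If $\pi'$ and $\pi''$ share vertices, I would replace their union by a tree rooted at $a$.

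Next I would use the trek rule of \citet{hansen2025trek}, or equivalently a perturbation expansion of the Lyapunov equation in $\varepsilon$, to identify the leading order of each entry $\Sigma_{pz}$. Since instruments are roots, every trek from $z$ to $p$ is a directed path from $z$. Under the specialization, $\Sigma_{q_i w_i}$ has order $\varepsilon^{\ell_i}$, where $\ell_i$ is the length of $\rho_i$. For every $j\neq i$ we have $\Sigma_{q_i w_j}=0$, and also $\Sigma_{q_i z'}=\Sigma_{q_i z''}=0$, because no retained directed path joins these pairs. Likewise $\Sigma_{a w_j}=0$. The row $y$ picks up contributions from every instrument, one order of $\varepsilon$ higher.

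Ordering the rows as $(q_1,\dots,q_{r-1},a,y)$ and the columns as $(w_1,\dots,w_{r-1},z',z'')$, the minor is then block upper triangular. Its first block is the diagonal matrix with entries $\Sigma_{q_iw_i}\neq 0$. The remaining $2\times2$ block is
\[
\begin{pmatrix}\Sigma_{az'} & \Sigma_{az''}\\ \Sigma_{yz'} & \Sigma_{yz''}\end{pmatrix}.
\]
The determinant therefore factors as $\prod_i\Sigma_{q_iw_i}$ times the determinant of this $2\times2$ block.

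The main obstacle is showing that this $2\times2$ block is nonsingular, and this is where $m_{z'z'}\neq m_{z''z''}$ enters. I would use the $(y,z)$-entry of the Lyapunov equation for $z\in\{z',z''\}$ under the specialization. Since $\Sigma_{q_i z}=0$ and all hidden parents are disconnected, that entry reduces to $m_{yy}\Sigma_{yz}+m_{ya}\Sigma_{az}+m_{zz}\Sigma_{yz}=0$, which gives
\[
\Sigma_{yz}=-\frac{m_{ya}}{m_{yy}+m_{zz}}\,\Sigma_{az}.
\]
The determinant of the $2\times2$ block is then
\[
\Sigma_{az'}\Sigma_{az''}\,m_{ya}\left(\frac{1}{m_{yy}+m_{z'z'}}-\frac{1}{m_{yy}+m_{z''z''}}\right),
\]
which is nonzero when $m_{z'z'}\neq m_{z''z''}$ and $\Sigma_{az'},\Sigma_{az''}\neq0$. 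The last condition holds at leading order in $\varepsilon$, since $\Sigma_{az}$ is a product of path coefficients divided by nonvanishing sums of diagonal entries. Care is needed that the specialization does not introduce other parents of $y$ into this equation; the zeroing out handles that. In the tree case I also need $\Sigma_{az'}$ and $\Sigma_{az''}$ not to cancel, which holds for generic diagonal choices. Finally, the determinant is a nonzero polynomial at this point, so it is nonzero generically, and $\operatorname{rank}(\boldsymbol{\Sigma}_{PZ})=|P|$.
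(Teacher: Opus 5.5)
This is essentially the paper's proof. It uses the same minor with rows $(q_1,\dots,q_{r-1},a,y)$ and columns $(w_1,\dots,w_{r-1},z',z'')$, the same specialization (only the selected path edges and the edges $p\to y$ retained), the same factorization through the zero block $\boldsymbol{\Sigma}_{\{q_1,\dots,q_{r-1}\},\{z',z''\}}$ (block \emph{lower} triangular, not upper), and the same $(y,z)$-entry of the Lyapunov equation giving the factor $m_{z''z''}-m_{z'z'}$; your pruning of $\pi'\cup\pi''$ to an in-tree at $a$ is a harmless refinement that removes cycles the paper glosses over.

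The one step to tighten is your claim that vertex-disjointness makes the retained graph a disjoint union of path systems feeding into $y$. It does not exclude $y$ lying on a selected path, entered through a latent parent of $y$, and that would break both the zero block and your reduced $(y,z)$ equation. As the paper notes, instrumental validity closes this: the predecessor of $y$ on such a path cannot be an observed parent, since those appear on the selected paths only as endpoints, and a latent one would descend from a root instrument, violating $\Sigma_{uz}=0$.
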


\begin{proof}[Proof sketch]
The proof follows the same strategy as Theorem~\ref{thm:directIV}. We specialize the model parameters so that only the prescribed paths are active. The vertex-disjointness assumptions isolate the covariance contributions of the paths $\rho_1,\ldots,\rho_{r-1}$, yielding independent covariance terms for the parents in $P'\setminus\{a\}$, while ensuring that these paths do not interfere with the two paths terminating at $a$. The remaining $2\times2$ covariance block associated with $a$, $y$, and the instruments $z',z''$ is identical to that of Theorem~\ref{thm:directIV} and is nonsingular whenever $m_{z'z'}\neq m_{z''z''}$. Consequently, the selected $|P|\times|P|$ covariance minor is generically nonzero, implying $\operatorname{rank}(\boldsymbol{\Sigma}_{PZ})=|P|$. 
\end{proof}

\begin{remark}
The number of hidden parents of $y$ does not directly affect identifiability. Hidden parents may be arbitrarily many, provided that they are marginally uncorrelated with the instruments. Thus, the relevant obstruction is not the size of $U$, but the existence of treks between $U$ and $Z$.
\end{remark}

\begin{remark}
The instrumental-variable results do not require the observed subgraph to be acyclic or simple. Reciprocal interactions may occur among the target, its observed parents, and intermediate observed vertices, provided that the instruments remain root vertices and the required trek-separation conditions between the instruments and latent parents are satisfied.
\end{remark}

\subsubsection{Bottleneck Investigation}

Theorem~\ref{thm:nondirectIV} gives a sufficient, but not necessary, graphical condition for generic full rank of $\boldsymbol{\Sigma}_{PZ}$ based on vertex-disjoint paths. 
We investigate whether recovery remains possible when the number of intermediate nodes is insufficient to satisfy this condition.

For each graph configuration, we generate 100 random stable drift matrices with diagonal diffusion matrices and compute the stationary covariance matrix directly from the Lyapunov equation. The generated graphs have a layered structure in which every directed path from $Z$ to $P'$ passes through the set $Q$ of intermediate nodes, with no direct edges from $Z$ to $P'$. Hence, whenever $|Q|<|P'|-1$, the $|P'|-1$ pairwise vertex-disjoint paths required by Theorem~\ref{thm:nondirectIV} cannot exist. We vary the bottleneck ratio ${|Q|}/{|P'|}$, recover $\hat m_{yP'}$ from $\boldsymbol{\Sigma}_{PZ}$, and report the mean relative error $\|\hat m_{yP'}-m_{yP'}\|_2/\|m_{yP'}\|_2$. The graph generation procedure, parameter initializations, and implementation are provided in the Appendix.

Figure~\ref{fig:bottleneck} shows that accurate recovery is possible even when $|Q|<|P'|-1$, where the vertex-disjoint-path condition of Theorem~\ref{thm:nondirectIV} is guaranteed to fail. This demonstrates that the sufficient graphical condition of Theorem~\ref{thm:nondirectIV} is not necessary for full-rank recovery. Moreover, larger parent sets require a higher bottleneck ratio before the estimation error decreases substantially, suggesting that recovering more coefficients requires proportionally more intermediate vertices.

\begin{figure}
    \centering
    \includegraphics[width=0.8\linewidth]{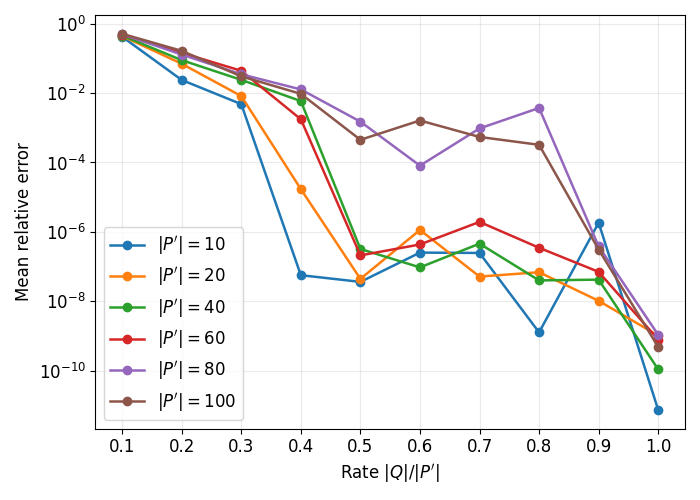}
    \caption{Recovery error of the target drift coefficients as a function of the bottleneck ratio $ {\vert Q \rvert}/{\vert P' \rvert}$. $Q$ is the intermediate nodes and $P' = P \setminus\{y\}$
    . Curves correspond to different numbers of parents. Latent parents $\lvert U \rvert = 5$.}
    \label{fig:bottleneck}
\end{figure}

\section{Restrictiveness of Theorem 1}

We empirically assess how restrictive the sufficient  graphical condition of Theorem~1 is relative to the kernel-rank criterion of Lemma~1; experimental details and the full results are reported in Appendix. Among non-simple graphs with numerical full kernel rank, Theorem~1 certifies $94.4$--$97.8\%$ of cases in the sparsest regime considered (edge probability $0.05$). Coverage decreases with graph density, reaching $6.4\%$ for $|V|=12$ at edge probability $0.30$. This decline is expected because denser graphs contain more overlapping 2-cycles, which increasingly violate the vertex-disjoint structure required by Theorem~1, even when the kernel matrix remains full rank. Thus, the theorem provides broad coverage in sparse regimes while leaving additional identifiable structures beyond its sufficient conditions in denser graphs.

\section{Conclusion and Future Work}

We studied identification from stationary covariance in non-simple linear SDEs through a common rank-based perspective. Directed 2-cycles are allowed in both the fully and partially observed settings. We established graphical conditions for generic identifiability in the presence of directed $2$-cycles, developed an instrumental-variable framework for identifying direct drift coefficients under latent confounding, and derived graphical conditions guaranteeing the required instrumental covariance rank. As this work focuses on the theoretical problem of identifiability, an important direction for future work is to investigate the statistical properties of the proposed framework under finite samples, including consistency, convergence rates, and robustness to covariance estimation error. Additional directions include extending the graphical identifiability results to broader classes of reciprocal structures and relaxing assumptions on the diffusion matrix.



\bibliography{AISTATS}

\clearpage
\appendix
\thispagestyle{empty}

\onecolumn
\aistatstitle{Appendix}

\section{Proofs}

\subsection{Proof of Theorem \ref{thm:3v_motif}}

\begin{proof}

By Lemma~1, it suffices to show that there exists
\(\boldsymbol{\Sigma}\in\mathcal M_{G,\mathbf C}\) such that
\(\operatorname{rank}\left(\mathbf H(\boldsymbol{\Sigma})_{E^c,\cdot}\right)=\binom d2\), where \(|V|=d\).

Let the directed \(2\)-cycles of \(G\) be
\(c_m=\{i_m,j_m\}\), \(m=1,\ldots,s\),
and let \(k_m\) denote their witnesses from the assumption of the
theorem. Thus, after possibly exchanging \(i_m\) and \(j_m\),
\(k_m\sim i_m\) and \(k_m\not\sim j_m\),
each \(k_m\) belongs to no directed \(2\)-cycle, and the vertices
\(k_1,\ldots,k_s\) are pairwise distinct.

Define \(N_m:=\{i_m,j_m,k_m\}\) and
\(R:=V\setminus\bigcup_{m=1}^sN_m\).
By the assumptions of the theorem, the sets
\(N_1,\ldots,N_s,\{r\}_{r\in R}\)
are pairwise disjoint and form a partition of \(V\).

\paragraph{Parameter specialization.}

For every vertex \(v\in V\), choose
\(m_{vv}=-\alpha_v\), with \(\alpha_v>0\).

For each neighborhood \(N_m=\{i_m,j_m,k_m\}\), retain all three edges
\(i_m\to j_m\), \(j_m\to i_m\),
and the unique directed edge between \(i_m\) and \(k_m\).
Choose all three corresponding drift coefficients strictly
positive and nonzero.

Set all remaining off-diagonal coefficients equal to zero. In
particular, this disconnects distinct neighborhoods from one
another in the chosen parameter specialization, but both edges of
every directed \(2\)-cycle remain nonzero.

For a fixed neighborhood \(N_m=\{i,j,k\}\), write
\(m_{ii}=-a\), \(m_{jj}=-b\), \(m_{kk}=-c\), with \(a,b,c>0\),
and denote the two coefficients of the directed \(2\)-cycle by
\(m_{ji}=p>0\) and \(m_{ij}=q>0\).
Thus \(p\) corresponds to \(i\to j\) and \(q\) to
\(j\to i\).

Choose these coefficients such that \(pq<ab\).
The \(2\times2\) drift block on \(\{i,j\}\) is then Hurwitz.
Since the witness edge is one-directional, the \(3\times3\)
block on \(N_m\) is block triangular after an appropriate
ordering, with the eigenvalue \(-c\) in addition to the two
eigenvalues of the \(2\times2\) cycle block. Hence every
neighborhood block is Hurwitz. Therefore the full specialized
matrix \(\mathbf M\) is Hurwitz.

Since \(\mathbf C\) is diagonal positive definite, the Lyapunov
equation
\(\mathbf M\boldsymbol{\Sigma}+\boldsymbol{\Sigma}\mathbf M^\top+\mathbf C=0\)
has a unique positive definite solution
\(\boldsymbol{\Sigma}\).

Moreover, \(\mathbf M\) and \(\mathbf C\) are block diagonal
with respect to
\(N_1,\ldots,N_s,\{r\}_{r\in R}\).
By uniqueness of the Lyapunov solution,
\(\boldsymbol{\Sigma}\) is block diagonal with respect to the
same partition. Consequently, \(\Sigma_{uv}=0\)
whenever \(u\) and \(v\) belong to distinct blocks.

We choose the positive off-diagonal coefficients
\(p,q\), and the witness-edge coefficient in every neighborhood,
sufficiently small that each covariance block
\(\boldsymbol{\Sigma}_{N_mN_m}\) is strictly diagonally
dominant:
\[
\Sigma_{vv}
>
\sum_{\substack{u\in N_m\\u\neq v}}
|\Sigma_{uv}|,
\qquad
v\in N_m.
\]
Such a choice is possible because the stationary covariance
depends continuously on the drift coefficients and can be made
arbitrarily close to a positive diagonal matrix by choosing all
off-diagonal coefficients sufficiently small. Importantly, all
coefficients \(p,q\) are chosen strictly positive; no directed
\(2\)-cycle is removed.

We will also use that \(\Sigma_{ik}>0\) for every neighborhood.
Indeed, the specialized drift matrix is Metzler, since all retained off-diagonal coefficients are
positive. Hence \(e^{t\mathbf M}\geq0\) entrywise for \(t\geq0\).
Moreover, because there is a direct
positive edge between \(i\) and \(k\), the corresponding entry of
\(e^{t\mathbf M}\) is strictly positive for \(t>0\). Using the
stationary covariance representation
\[
\boldsymbol{\Sigma}
=
\int_0^\infty
e^{t\mathbf M}\mathbf C e^{t\mathbf M^\top}\,dt,
\]
with \(\mathbf C\) diagonal positive definite, it follows that
\(\Sigma_{ik}>0\).

\paragraph{Local blocks.}

Fix one directed \(2\)-cycle \(i\leftrightarrow j\)
with witness \(k\), where \(k\sim i\) and \(k\not\sim j\).

Consider the three covariance columns
\((i,j),(i,k),(j,k)\).
The displayed matrices use these oriented column labels for
notational convenience. Changing the fixed ordering convention
for an unordered covariance column only changes its sign and
therefore does not affect nonsingularity.

Suppose first that \(i\to k\).
Then \(k\to i\), \(k\to j\), and \(j\to k\)
are non-edges. Selecting the rows
\(j\to k\), \(k\to i\), and \(k\to j\)
gives, up to harmless row and column signs,
\[
\mathbf H_m
=
\begin{pmatrix}
\Sigma_{ik} & 0 & -\Sigma_{kk}\\
0 & \Sigma_{ii} & \Sigma_{ij}\\
0 & \Sigma_{ij} & \Sigma_{jj}
\end{pmatrix}.
\]
Therefore
\[
\det(\mathbf H_m)
=
\Sigma_{ik}
\left(
\Sigma_{ii}\Sigma_{jj}-\Sigma_{ij}^2
\right).
\]
Since \(\boldsymbol{\Sigma}\succ0\), the principal \(2\times2\) minor satisfies
\(\Sigma_{ii}\Sigma_{jj}-\Sigma_{ij}^2>0\).
Moreover, \(\Sigma_{ik}>0\).
Thus \(\det(\mathbf H_m)\neq0\).

Now suppose that \(k\to i\).
Then \(i\to k\), \(k\to j\), and \(j\to k\)
are non-edges. Selecting these three rows gives
\[
\mathbf H_m
=
\begin{pmatrix}
-\Sigma_{jk} & -\Sigma_{kk} & 0\\
0 & \Sigma_{ij} & \Sigma_{jj}\\
\Sigma_{ik} & 0 & -\Sigma_{kk}
\end{pmatrix}.
\]
Hence
\[
\det(\mathbf H_m)
=
\Sigma_{kk}
\left(
\Sigma_{ij}\Sigma_{jk}
-
\Sigma_{ik}\Sigma_{jj}
\right).
\]
We now show directly from the Lyapunov equation that the
expression in parentheses is nonzero.

Recall that \(m_{ji}=p>0\)
is the coefficient corresponding to the edge \(i\to j\).
Since \(k\to i\) and there is no edge between \(j\) and \(k\),
the \((j,k)\)-entry of the Lyapunov equation gives
\(p\Sigma_{ik}-(b+c)\Sigma_{jk}=0\).
Therefore
\(\Sigma_{jk}=\frac{p}{b+c}\Sigma_{ik}\).

The \((j,j)\)-entry of the Lyapunov equation gives
\(2p\Sigma_{ij}-2b\Sigma_{jj}+C_{jj}=0\),
and hence
\(p\Sigma_{ij}=b\Sigma_{jj}-\frac{C_{jj}}{2}\).

Consequently,
\[
\begin{aligned}
\Sigma_{ij}\Sigma_{jk}
-
\Sigma_{ik}\Sigma_{jj}
&=
\Sigma_{ik}
\left(
\frac{p\Sigma_{ij}}{b+c}
-
\Sigma_{jj}
\right)
\\
&=
\Sigma_{ik}
\left(
\frac{
b\Sigma_{jj}-C_{jj}/2
}{
b+c
}
-
\Sigma_{jj}
\right)
\\
&=
-
\frac{\Sigma_{ik}}
{b+c}
\left(
c\Sigma_{jj}
+
\frac{C_{jj}}{2}
\right).
\end{aligned}
\]
Since
\(\Sigma_{ik}>0\), \(\Sigma_{jj}>0\), \(C_{jj}>0\), and \(b+c>0\),
we obtain
\(\Sigma_{ij}\Sigma_{jk}-\Sigma_{ik}\Sigma_{jj}<0\).
Therefore \(\det(\mathbf H_m)\neq0\).

Thus, in both possible orientations of the witness edge, the
three local columns admit a nonsingular \(3\times3\) minor while
both edges of the directed \(2\)-cycle remain nonzero.

\paragraph{Mixed blocks.}

It remains to consider covariance columns whose vertices belong to
distinct blocks among \(N_1,\ldots,N_s,\{r\}_{r\in R}\).

Let \(A\) and \(B\) be two distinct such blocks. Consider all
columns \((a,b)\), \(a\in A\), \(b\in B\).
There are \(|A||B|\) such columns.

For every \(a\in A\) and \(b\in B\), at least one of
\(a\to b\) and \(b\to a\)
belongs to \(E^c\). Otherwise \(a\) and \(b\) would themselves
form a directed \(2\)-cycle. This would contradict either the
vertex-disjointness of the directed \(2\)-cycles, the assumption
that no witness \(k_m\) belongs to a directed \(2\)-cycle, or
the definition of \(R\).

For every mixed column \((a,b)\), select one available non-edge
row, either \(a\to b\) or \(b\to a\). Distinct mixed columns
give distinct selected rows. Denote the resulting square
submatrix by \(\mathbf H_{AB}\).
We show that \(\mathbf H_{AB}\) is strictly diagonally
dominant.

Suppose the row selected for column \((a,b)\) is \(a\to b\).
Its entry in the corresponding column is, up to sign,
\(\Sigma_{bb}\).
Within \(\mathbf H_{AB}\), this row may also contain nonzero
entries in columns \((a,c)\), \(c\in B\), \(c\neq b\).
These entries are, up to sign, \(\Sigma_{cb}\).
In particular, these terms need not vanish when \(b\) and \(c\)
belong to the same neighborhood \(N_m\).

Nevertheless, by the strict diagonal dominance of
\(\boldsymbol{\Sigma}_{BB}\),
\[
\Sigma_{bb}
>
\sum_{\substack{c\in B\\c\neq b}}
|\Sigma_{cb}|.
\]
Hence this row of \(\mathbf H_{AB}\) is strictly diagonally
dominant.

If instead the selected row is \(b\to a\),
its diagonal entry is, up to sign, \(\Sigma_{aa}\),
and its possible off-diagonal entries have the form
\(\Sigma_{ca}\), \(c\in A\), \(c\neq a\).
Therefore,
\[
\Sigma_{aa}
>
\sum_{\substack{c\in A\\c\neq a}}
|\Sigma_{ca}|.
\]
This row is also strictly diagonally dominant.

Thus every row of \(\mathbf H_{AB}\) is strictly diagonally
dominant. By the Levy--Desplanques theorem \cite{horn2012matrix},
\(\mathbf H_{AB}\) is nonsingular.

This argument includes the case in which one block is a singleton
\(\{r\}\subseteq R\). If both blocks are singletons, the
corresponding block is simply \(1\times1\) with nonzero entry
equal, up to sign, to a positive variance.

\paragraph{Combining the blocks.}

For each \(N_m\), select the three local rows defining
\(\mathbf H_m\). For each unordered pair of distinct blocks
\(A,B\), select the rows defining \(\mathbf H_{AB}\).

Together these rows correspond to all \(\binom d2\)
covariance columns and form a square
\(\binom d2\times\binom d2\)
submatrix of
\(\mathbf H(\boldsymbol{\Sigma})_{E^c,\cdot}\).

After suitable row and column permutations, this selected matrix
is block diagonal.

Indeed, a row selected for a pair of blocks \(A,B\) can have a
nonzero entry only in columns corresponding to the same pair of
blocks. Any apparent entry involving a third block contains a
covariance between distinct covariance blocks and therefore
vanishes because \(\boldsymbol{\Sigma}\) is block diagonal.
Similarly, mixed rows vanish on local columns, and local rows
associated with \(N_m\) vanish on columns associated with
\(N_\ell\), \(m\neq\ell\).

Thus the selected minor has the form
\[
\operatorname{diag}
\left(
\mathbf H_1,\ldots,\mathbf H_s,
\{\mathbf H_{AB}:A\neq B\}
\right).
\]
Every local block \(\mathbf H_m\) is nonsingular, and every
mixed block \(\mathbf H_{AB}\) is nonsingular. Hence the entire
selected minor is nonsingular.

Therefore,
\(\operatorname{rank}\left(\mathbf H(\boldsymbol{\Sigma})_{E^c,\cdot}\right)
=\binom d2=\frac{d(d-1)}2\).
By Lemma~1, \(\mathcal M_{G,\mathbf C}\) is generically identifiable.
\end{proof}

\subsection{Proof of Proposition \ref{prop:one-vertex-extensions}}

\begin{proof}
Let \(G\) be one of the four-node extensions listed in
Proposition~1. Since \(d=4\), the matrix
\(\mathbf H(\boldsymbol{\Sigma})_{E^c,\cdot}\) has
\(\binom{4}{2}=6\) columns. By Lemma~1, it is sufficient to
find one covariance matrix
\(\boldsymbol{\Sigma}\in\mathcal M_{G,C}\) such that
\(\operatorname{rank}\bigl(\mathbf H(\boldsymbol{\Sigma})_{E^c,\cdot}\bigr)=6\).
Equivalently, it is enough to exhibit a nonsingular
\(6\times6\) minor of
\(\mathbf H(\boldsymbol{\Sigma})_{E^c,\cdot}\).

Some extensions satisfy the assumptions of Theorem~1
directly. In particular, if the added vertex is adjacent to
exactly one endpoint of an isolated directed \(2\)-cycle,
generic identifiability follows immediately from Theorem~1.
We therefore only consider the extensions that are not covered
by that result.

For each remaining graph, we select six rows indexed by
non-edges and consider the determinant of the corresponding
\(6\times6\) minor. Since every entry of
\(\mathbf H(\boldsymbol{\Sigma})\) is linear in the entries of
\(\boldsymbol{\Sigma}\), this determinant is a polynomial in
the covariance entries. Therefore, it is sufficient to show
that it is nonzero for one admissible covariance matrix. This
proves that the determinant is not identically zero and hence
that the selected rows are generically linearly independent.

\paragraph{Common-child extension.}

Label the common child by \(1\), the vertices of the directed
\(2\)-cycle by \(2\) and \(3\), and the added vertex by
\(4\). Thus, the original motif contains
\(2\to1\), \(3\to1\), \(2\to3\), and \(3\to2\).
Order the columns of the kernel matrix as
\((1,2),(1,3),(1,4),(2,3),(2,4),(3,4)\).

Consider first the extension \(1\to4\). Using the non-edge
rows \(1\to2\), \(1\to3\), \(4\to2\), \(4\to3\),
\(2\to4\), and \(3\to4\), we obtain the minor
\[
K_{\mathrm{cc}}(\boldsymbol{\Sigma})
=
\begin{pmatrix}
-\Sigma_{22}&-\Sigma_{23}&-\Sigma_{24}&0&0&0\\
-\Sigma_{23}&-\Sigma_{33}&-\Sigma_{34}&0&0&0\\
0&0&\Sigma_{12}&0&\Sigma_{22}&\Sigma_{23}\\
0&0&\Sigma_{13}&0&\Sigma_{23}&\Sigma_{33}\\
\Sigma_{14}&0&0&-\Sigma_{34}&-\Sigma_{44}&0\\
0&\Sigma_{14}&0&\Sigma_{24}&0&-\Sigma_{44}
\end{pmatrix}.
\]

The directed paths \(2\to1\to4\) and \(3\to1\to4\)
are treks. Therefore, \(\Sigma_{24}\) and \(\Sigma_{34}\)
are not structurally forced to vanish.

To give an exact rank certificate, take \(C=I_4\) and
\[
M^\star=
\begin{pmatrix}
-4&1&1&0\\
0&-5&1&0\\
0&1&-6&0\\
1&0&0&-7
\end{pmatrix}.
\]
This matrix has the required zero pattern and is Hurwitz by
strict diagonal dominance. Let
\(\boldsymbol{\Sigma}^\star\) be the unique solution of
\(M^\star\boldsymbol{\Sigma}^\star+
\boldsymbol{\Sigma}^\star(M^\star)^\top+I_4=0\).
Exact evaluation gives
\(\det K_{\mathrm{cc}}(\boldsymbol{\Sigma}^\star)
=
23289823567/15776105854472994800\neq0\).
Hence the determinant of
\(K_{\mathrm{cc}}(\boldsymbol{\Sigma})\) is not identically
zero. The six selected rows are therefore generically
linearly independent, and the graph with \(1\to4\) is
generically identifiable.

The optional reverse edge \(4\to1\) does not remove any of
the selected non-edge rows. Moreover, because
\(\mathcal M_{G,C}\) is defined by
\(\operatorname{supp}(M)\subseteq E\), the same parameter
point remains admissible for the graph containing
\(1\to4\) and \(4\to1\) by setting the coefficient associated
with the optional edge equal to zero. Thus the same minor
also proves generic identifiability when both directions are
present.

The reverse orientation \(4\to1\) alone is excluded. Indeed,
the paths \(4\to1\leftarrow2\) and
\(4\to1\leftarrow3\) contain a collider at \(1\).
Hence there is no trek from \(4\) to either \(2\) or \(3\),
and the trek rule gives \(\Sigma_{24}=\Sigma_{34}=0\).
The column of
\(\mathbf H(\boldsymbol{\Sigma})_{E^c,\cdot}\) indexed by
\((2,3)\) is then identically zero: its only potentially
nonzero entries occur in the rows \(2\to4\) and \(3\to4\),
where they are respectively
\(-\Sigma_{34}\) and \(\Sigma_{24}\). Consequently,
\(\operatorname{rank}\bigl(
\mathbf H(\boldsymbol{\Sigma})_{E^c,\cdot}
\bigr)<6\).
This explains the exception in the common-child case.

\paragraph{Remaining extensions.}

The other extensions are proved by the same rank argument.
For each labelled configuration \(G\), the exact certificate
is constructed computationally in the supplemental code.
Specifically, the code constructs a rational Hurwitz drift
matrix \(M_G^\star\) with the required graph support, sets
\(C=I_4\), solves the Lyapunov equation exactly over the
rationals to obtain \(\boldsymbol{\Sigma}_G^\star\), constructs
\(\mathbf H(\boldsymbol{\Sigma}_G^\star)_{E^c,\cdot}\), and
selects six non-edge rows forming a nonsingular \(6\times6\)
minor. Thus, for the selected row set \(R_G\), with
\(K_G(\boldsymbol{\Sigma})
=
\mathbf H(\boldsymbol{\Sigma})_{R_G,\cdot}\), the supplemental
code provides an exact certificate satisfying
\(\det K_G(\boldsymbol{\Sigma}_G^\star)\neq0\).

Therefore, in each case, the determinant is not identically
zero and the selected six rows are generically linearly
independent. The supplemental material contains the
corresponding parameter specialization, selected non-edge
rows, and exact nonzero determinant for each labelled graph.

Configurations obtained by relabeling vertices only permute
the rows and columns of the selected minor and therefore
preserve its rank. Similarly, when an indicated reverse edge
is optional, adding that edge does not remove the rows of the
original certificate, and the same parameter specialization
remains admissible after assigning zero to the optional
coefficient.

Thus, for every extension listed in Proposition~1, there
exists
\(\boldsymbol{\Sigma}\in\mathcal M_{\mathbf{G,C}}\) such that
\(\operatorname{rank}\bigl(
\mathbf H(\boldsymbol{\Sigma})_{E^c,\cdot}
\bigr)=6\).
The conclusion follows from Lemma~1.
\end{proof}

\subsection{Proof of Lemma \ref{lemma2}}

\begin{proof}
Partition the drift matrix and stationary covariance according to the observed
and hidden variables:
\[
\textbf{M}=
\begin{pmatrix}
\textbf{M}_{OO} & \textbf{M}_{OL}\\
\textbf{M}_{LO} & \textbf{M}_{LL}
\end{pmatrix},
\qquad
\boldsymbol{\Sigma}=
\begin{pmatrix}
\boldsymbol{\Sigma}_{OO} & \boldsymbol{\Sigma}_{OL}\\
\boldsymbol{\Sigma}_{LO} & \boldsymbol{\Sigma}_{LL}
\end{pmatrix}.
\]

Since \(\boldsymbol{\Sigma}\) is a stationary covariance matrix, it satisfies
\(\textbf{M}\boldsymbol{\Sigma}+\boldsymbol{\Sigma}\textbf{M}^\top+\textbf{C}=0\).
Taking the \((O,O)\)-block gives
\[
\textbf{M}_{OO}\boldsymbol{\Sigma}_{OO}
+\textbf{M}_{OL}\boldsymbol{\Sigma}_{LO}
+\boldsymbol{\Sigma}_{OO}\textbf{M}_{OO}^\top
+\boldsymbol{\Sigma}_{OL}\textbf{M}_{OL}^\top
+\textbf{C}_{OO}
=0.
\]

Because \(\boldsymbol{\Sigma}\) is positive definite, \(\boldsymbol{\Sigma}_{OO}\) is invertible. By the
definition
\(\boldsymbol{\Gamma}_{L\mid O}
=
\boldsymbol{\Sigma}_{LO}\boldsymbol{\Sigma}_{OO}^{-1}\),
we have
\(\boldsymbol{\Sigma}_{LO}
=
\boldsymbol{\Gamma}_{L\mid O}\boldsymbol{\Sigma}_{OO}\).
Moreover, using the symmetry of \(\boldsymbol{\Sigma}\),
\(\boldsymbol{\Sigma}_{OL}
=
\boldsymbol{\Sigma}_{LO}^\top
=
\boldsymbol{\Sigma}_{OO}\boldsymbol{\Gamma}_{L\mid O}^\top\).

Therefore,
\(\textbf{M}_{OL}\boldsymbol{\Sigma}_{LO}
=
\textbf{M}_{OL}\boldsymbol{\Gamma}_{L\mid O}\boldsymbol{\Sigma}_{OO}\),
and
\(\boldsymbol{\Sigma}_{OL}\textbf{M}_{OL}^\top
=
\boldsymbol{\Sigma}_{OO}\boldsymbol{\Gamma}_{L\mid O}^\top \textbf{M}_{OL}^\top\).

Substituting these expressions into the \((O,O)\)-block equation yields
\[
\bigl(\textbf{M}_{OO}+\textbf{M}_{OL}\boldsymbol{\Gamma}_{L\mid O}\bigr)\boldsymbol{\Sigma}_{OO}
+
\boldsymbol{\Sigma}_{OO}
\bigl(\textbf{M}_{OO}+\textbf{M}_{OL}\boldsymbol{\Gamma}_{L\mid O}\bigr)^\top
+
\textbf{C}_{OO}
=0.
\]
Defining
\(\textbf{M}^O:=\textbf{M}_{OO}+\textbf{M}_{OL}\boldsymbol{\Gamma}_{L\mid O}\)
gives
\(\textbf{M}^O\boldsymbol{\Sigma}_{OO}
+\boldsymbol{\Sigma}_{OO}(\textbf{M}^O)^\top
+\textbf{C}_{OO}
=0\),
as claimed.
\end{proof}

\subsection{Proof of Proposition 3}

\begin{proof}
Since the stationary covariance matrix satisfies the continuous Lyapunov equation, its \((y,z)\)-entry is
\((\textbf{M}\boldsymbol{\Sigma})_{yz}+(\boldsymbol{\Sigma}\textbf{M}^\top)_{yz}+c_{yz}=0\).

Because \(\textbf{C}\) is diagonal and \(z\neq y\), we have \(c_{yz}=0\). Moreover,
\((\textbf{M}\boldsymbol{\Sigma})_{yz}
=\sum_{v\in V}m_{yv}\Sigma_{vz}\).
Since \(m_{yv}=0\) whenever \(v\notin\operatorname{Pa}(y)\),
\((\textbf{M}\boldsymbol{\Sigma})_{yz}
=\sum_{p\in P}m_{yp}\Sigma_{pz}
+\sum_{u\in U}m_{yu}\Sigma_{uz}\),
where \(P=\operatorname{Pa}(y)\cap O\) and
\(U=\operatorname{Pa}(y)\cap L\).

Similarly,
\((\boldsymbol{\Sigma}\textbf{M}^\top)_{yz}
=\sum_{v\in V}\Sigma_{yv}(\textbf{M}^\top)_{vz}
=\sum_{v\in V}\Sigma_{yv}m_{zv}\).
Since \(m_{zv}=0\) whenever \(v\notin\operatorname{Pa}(z)\),
\((\boldsymbol{\Sigma}\textbf{M}^\top)_{yz}
=\sum_{q\in\operatorname{Pa}(z)}m_{zq}\Sigma_{yq}\).

Substituting these expressions into the \((y,z)\)-entry of the Lyapunov equation yields
\[
\sum_{p\in P}m_{yp}\Sigma_{pz}
+\sum_{u\in U}m_{yu}\Sigma_{uz}
+\sum_{q\in\operatorname{Pa}(z)}m_{zq}\Sigma_{yq}
=0,
\]
as claimed.
\end{proof}

\subsection{Proof of Lemma 3}

\begin{proof}
    Since $z$ has no parents other than itself, the parents set is $\operatorname{Pa}(z)=\{z\}$. Hence $b_{z} = \sum_{q\in\operatorname{Pa}(z)} m_{zq}\Sigma_{yq} = m_{zz}\Sigma_{yz}$.

    It remains to show that $m_{zz}$ is identifiable. Consider the $(z,z)$-entry of the Lyapunov equation $2\sum_{v\in \operatorname{Pa}(z)} m_{zv}\Sigma_{v z} + c_{zz} = 0$. Since $Pa(z)=\{z\}$, hence $2m_{zz}\Sigma_{zz} + c_{zz} = 0$. 

    As $\boldsymbol{\Sigma}$ is a covariance matrix and $z$ has nonzero stationary variance, we have: $\Sigma_{zz}>0$. Thus $m_{zz} = -\frac{c_{zz}}{2\Sigma_{zz}}$. Both $c_{zz}$ and $\Sigma_{zz}$ are observable or assumed known under the diagonal diffusion parametrization, and hence $b_{z} = m_{zz}\Sigma_{yz}$ and, consequently, $b_z$ is identifiable.
\end{proof}

\subsection{Proof of Theorem \ref{thm:directIV}}

\begin{proof}
We first derive the covariance relation responsible for the additional rank. The \((z,y)\)-entry of the Lyapunov equation is
\(\sum_{p \in P'}m_{yp}\Sigma_{pz} + m_{yy}\Sigma_{yz} + m_{zz}\Sigma_{yz} = 0\).
Hence
\begin{equation}\label{eq:cov_matching}
    \Sigma_{yz} = -\frac{\displaystyle\sum_{p\in P'}m_{yp}\Sigma_{pz}}{m_{yy}+m_{zz}}.
\end{equation}

\paragraph{Sufficiency}
Suppose there exist \(a\), \(z'\), \(z''\), and a matching as in the statement. Let \(P' \setminus \{a\} = \{q_1, \cdots, q_{r-1}\}\) and let \(w_i \in Z\setminus\{z', z''\}\) be the instrument matched to \(q_i\). Consider the \((r+1) \times (r+1)\) submatrix
\[
\textbf{K} = \boldsymbol{\Sigma}_{PZ} = \begin{pmatrix}
\boldsymbol{\Sigma}_{P',\,\{z',z'',w_1,\ldots,w_{r-1}\}}\\[1mm]
\boldsymbol{\Sigma}_{y,\,\{z',z'',w_1,\ldots,w_{r-1}\}}
\end{pmatrix}.
\]

To prove generic nonsingularity, it is enough to exhibit one parameter choice for which \(\det (\textbf{K})\neq0\). Indeed, after clearing the nonzero denominators in \eqref{eq:cov_matching}, \(\det (\textbf{K})\) is a rational function whose numerator is a polynomial in the model parameters. Showing that it is nonzero at one parameter point proves that this polynomial is not identically zero.

We construct a parameter specialization of the drift matrix. Retain only the off-diagonal coefficients corresponding to the edges \(z'\to a\), \(z''\to a\), \(w_i\to q_i\), for \(i=1,\ldots,r-1\), together with the edges \(p\to y\), for \(p\in P\), and set all remaining off-diagonal coefficients of \(\textbf{M}\) to zero. The parameters can be chosen with negative diagonal entries and sufficiently small off-diagonal entries, so stability is preserved.

Under these values for parameters, the covariance submatrix on the rows \(q_1,\ldots,q_{r-1}\) and columns \(w_1,\ldots,w_{r-1}\) is diagonal: \(\Sigma_{q_iw_j}=0\) for \(i\neq j\), and \(\Sigma_{q_iw_i}\neq0\). Likewise, \(\Sigma_{az'}\neq0\), \(\Sigma_{az''}\neq0\), while the remaining entries in the \(z'\)- and \(z''\)-columns vanish on the rows \(P'\setminus\{a\}\).

Consequently, after a permutation of rows and columns, the determinant of \(K\) factors as
\[
\lvert \det (\textbf{K}) \rvert=
\left(\prod_{i=1}^{r-1}\lvert \Sigma_{q_iw_i} \rvert\right)
\left| \det \begin{pmatrix}
\Sigma_{az'} & \Sigma_{az''}\\
\Sigma_{yz'} & \Sigma_{yz''}
\end{pmatrix} \right|.
\]

Under the chosen parametrization, in the equation \eqref{eq:cov_matching}, only the parent \(a\) contributes to the \(z'\)- and \(z''\)-columns, we obtain
\(\Sigma_{yz'}=-\frac{m_{ya}\Sigma_{az'}}{m_{yy}+m_{z'z'}}\) and
\(\Sigma_{yz''}=-\frac{m_{ya}\Sigma_{az''}}{m_{yy}+m_{z''z''}}\).

Therefore,
\[
\det
\begin{pmatrix}
\Sigma_{az'} & \Sigma_{az''}\\
\Sigma_{yz'} & \Sigma_{yz''}
\end{pmatrix}
=
m_{ya}\Sigma_{az'}\Sigma_{az''}
\frac{m_{z''z''}-m_{z'z'}}
     {(m_{yy}+m_{z'z'})(m_{yy}+m_{z''z''})}.
\]
This expression is nonzero because \(m_{ya}\neq0\), \(\Sigma_{az'}\neq0\), \(\Sigma_{az''}\neq0\), \(m_{z'z'}\neq m_{z''z''}\). It follows that \(\det (\textbf{K})\neq0\). Hence, generically,
\[
\operatorname{rank}(\boldsymbol{\Sigma}_{PZ}) =
\operatorname{rank}\begin{pmatrix}
\boldsymbol{\Sigma}_{P'Z}\\
\boldsymbol{\Sigma}_{yZ}
\end{pmatrix}
= r+1 = \lvert P \rvert .
\]

\paragraph{Necessity}
Suppose, generically,
\(\operatorname{rank} (\boldsymbol{\Sigma}_{PZ}) = \operatorname{rank}\begin{pmatrix}
\boldsymbol{\Sigma}_{P'Z}\\\boldsymbol{\Sigma}_{yZ}\end{pmatrix}
=r+1 = \lvert P \rvert\). Since adding one row can increase rank by at most one, necessarily \(\operatorname{rank}(\boldsymbol{\Sigma}_{P'Z})=r\).

Under the assumption that every \(Z\)-to-\(P'\) path is a single edge, the generic support pattern of \(\boldsymbol{\Sigma}_{P'Z}\) is precisely the bipartite graph \(B\). Therefore, the generic rank theorem for bipartite matrices implies that \(B\) contains a matching covering \(P'\). Moreover, an \((r+1)\times(r+1)\) nonsingular minor must use \(r+1\) distinct instrument columns. Choose such a set \(\widetilde Z\subseteq Z\), \(\lvert\widetilde Z\rvert=r+1\). Because \(\Sigma_{P',\widetilde Z}\) has rank \(r\), its support graph contains a matching \(\mathcal{H_M}\) covering \(P'\). This matching uses exactly \(r\) instruments, leaving one instrument \(z''\in\widetilde Z\) unmatched. The \(z''\)-column cannot vanish on all rows indexed by \(P'\), because then \eqref{eq:cov_matching} would also gives \(\Sigma_{yz''}=0\), and the corresponding column of the augmented matrix would be zero. Hence \(z''\) is adjacent to at least one parent, say \(a\in P'\). Let \(z'\) denote the instrument matched to \(a\) by \(\mathcal{H_M}\). Then \(z'\to a\), \(z''\to a\). After deleting \(a\), \(z'\), and \(z''\), the remaining edges of \(\textbf{M}\) constitute a matching covering \(P'\setminus\{a\}\).

Finally, the inequality \(m_{z'z'}\neq m_{z''z''}\) does not need to be imposed as an additional condition in a generic rank statement. From~\eqref{eq:cov_matching}, the rank augmentation associated with \(z'\) and \(z''\) contains the factor
\(-\frac{1}{m_{yy}+m_{z'z'}}+\frac{1}{m_{yy}+m_{z''z''}}\),
which vanishes precisely when \(m_{z'z'}=m_{z''z''}\). Since the diagonal drift coefficients are free parameters, this equality defines a proper algebraic subset of the parameter space and is excluded by genericity. Hence unequal self-drifts are not part of the graphical characterization. This completes the proof.

\end{proof}

\subsection{Proof of Theorem \ref{thm:nondirectIV}}

\begin{proof}
Let \(\vert P\rvert = \lvert P' \rvert +1 = r + 1\). It is enough to show one admissible parameter point at which an \((r+1) \times (r+1)\) minor of \(\boldsymbol{\Sigma}_{PZ}\) is nonzero. The determinant of every such minor is a rational function of the model parameters. If it is nonzero at one admissible parameter point, it is nonzero outside a proper algebraic subset of the parameter space.

Consider the columns indexed by \(Z_0=\{w_1,\ldots,w_{r-1},z',z''\}\) and order the rows as \(q_1,\ldots,q_{r-1},a,y\). Let \(\textbf{K}:=\boldsymbol{\Sigma}_{\{q_1,\ldots,q_{r-1},a,y\},Z_0}\).

We construct an instantiation of parameters. In the drift matrix, retain only the off-diagonal coefficients corresponding to the selected paths \(\pi'\), \(\pi''\), and \(\rho_1,\ldots,\rho_{r-1}\), together with the parent edges \(p\to y\), \(p\in P\), and set all other off-diagonal coefficients to zero. Choose the path coefficients to be nonzero.

By the vertex-disjointness assumptions and the instrumental-validity condition, the internal vertices of the selected paths cannot belong to \(P\cup\{y\}\). Consequently, the additional edges \(p\to y\) cannot create a directed connection between two distinct selected path systems.

Finally, choose the diagonal drift coefficients to be sufficiently
negative. More precisely, after fixing the off-diagonal coefficients,
choose \(m_{vv}<0\) with sufficiently large magnitude so that every
Gershgorin disc of \(\textbf{M}\) lies strictly in the open left
half-plane. The resulting drift matrix is therefore stable. We may,
in addition, choose \(m_{z'z'}\neq m_{z''z''}\).

Under this instantiation, the covariance submatrix on the rows \(q_1,\ldots,q_{r-1}\) and columns \(w_1,\ldots,w_{r-1}\) is diagonal: \(\Sigma_{q_iw_j}=0\) for \(i\neq j\), and \(\Sigma_{q_iw_i}\neq0\) whereas the directed path \(\rho_i:w_i\rightsquigarrow q_i\). Consequently, after the chosen ordering, \(\boldsymbol{\Sigma}_{\{q_1,\ldots,q_{r-1}\},\{w_1,\ldots,w_{r-1}\}}\) is diagonal with nonzero diagonal entries. Hence, the determinant is nonzero.

Similarly, because every \(\rho_i\) is vertex-disjoint from \(\pi'\cup\pi''\), there is no trek between \(q_i\) and either \(z'\) or \(z''\) under this instantiation. Therefore, \(\Sigma_{q_i z'}=\Sigma_{q_i z''}=0\), for \(i=1,\ldots,r-1\). Thus \(\textbf{K}\) has the block lower-triangular form
\[
\textbf{K}=
\begin{pmatrix}
\boldsymbol{\Sigma}_{\{q_1,\ldots,q_{r-1}\},\{w_1,\ldots,w_{r-1}\}} & 0\\
\textbf{B} & \boldsymbol{\Sigma}_{\{a,y\},\{z',z''\}}
\end{pmatrix},
\]
where the block \(\textbf{B}\) need not vanish.

In particular,
\(\det(\textbf{K})
=
\det(\boldsymbol{\Sigma}_{\{q_1,\ldots,q_{r-1}\},\{w_1,\ldots,w_{r-1}\}})
\det(\boldsymbol{\Sigma}_{\{a,y\},\{z',z''\}})\).

It remains to show that \(\det(\boldsymbol{\Sigma}_{\{a,y\},\{z',z''\}})\neq0\). The directed paths \(\pi'\) and \(\pi''\) imply, generically, \(\Sigma_{az'}\neq0\) and \(\Sigma_{az''}\neq0\).

Because \(z'\) is a root and has no direct edge to \(y\), the \((y,z')\)-entry of the Lyapunov equation is
\(\sum_{p\in P'}m_{yp}\Sigma_{pz'} + (m_{yy}+m_{z'z'})\Sigma_{yz'} = 0\).
Under the selected instantiation, \(\Sigma_{pz'}=0\) for every \(p\in P'\setminus\{a\}\). Hence
\((m_{yy}+m_{z'z'})\Sigma_{yz'} + m_{ya}\Sigma_{az'} = 0\),
and therefore
\(\Sigma_{yz'} = -\frac{m_{ya}\Sigma_{az'}}{m_{yy}+m_{z'z'}}\) and
\(\Sigma_{yz''} = -\frac{m_{ya}\Sigma_{az''}}{m_{yy}+m_{z''z''}}\).

The denominators are nonzero because all diagonal drift coefficients were chosen strictly negative in this specialization. Therefore,
\[
\det(\boldsymbol{\Sigma}_{\{a,y\},\{z',z''\}})
=
\det
\begin{pmatrix}
\Sigma_{az'} & \Sigma_{az''}\\
\Sigma_{yz'} & \Sigma_{yz''}
\end{pmatrix}
=
m_{ya}\Sigma_{az'}\Sigma_{az''}
\frac{m_{z''z''}-m_{z'z'}}
     {(m_{yy}+m_{z'z'})(m_{yy}+m_{z''z''})}.
\]

This quantity is nonzero because \(m_{ya}\neq0\), \(\Sigma_{az'}\neq0\), \(\Sigma_{az''}\neq0\), and \(m_{z'z'}\neq m_{z''z''}\).

It follows that \(\det(\textbf{K})\neq0\). Hence \(\boldsymbol{\Sigma}_{PZ}\) contains a nonzero \((r+1)\times(r+1)\) minor, and therefore, generically, \(\operatorname{rank}(\boldsymbol{\Sigma}_{PZ})=r+1=|P|\).

\end{proof}

\section{Further Details on the Four-Node Extensions}

This section provides additional details on the four-node extensions associated with Proposition~\ref{prop:one-vertex-extensions}. In particular, we illustrate the identifiable configurations covered by the proposition and give examples of extensions that fail to restore generic identifiability.

The examples below emphasize that adding a fourth vertex is not sufficient by itself to repair the rank deficiency of a deficient three-node motif. What matters is whether the added structure changes the trek and covariance structure in a way that introduces an additional independent constraint in
$\mathbf{H}(\boldsymbol{\Sigma})_{E^c,\cdot}$.

\subsection{Identifiable Four-Node Extensions}

Figure~\ref{fig:four_node_motifs} shows representative four-node configurations that are generically identifiable. The red arrows indicate admissible extensions involving the added vertex $v$. A solid red arrow is required, whereas a dashed red arrow denotes an optional edge. Blue connections indicate adjacencies whose orientation may be arbitrary; whenever several blue adjacencies are shown, at least one of the indicated connections must be present.

\begin{figure}[h]
    \centering
    \begin{subfigure}[t]{0.21\columnwidth}
        \centering
        \includegraphics[page=1,width=\linewidth]{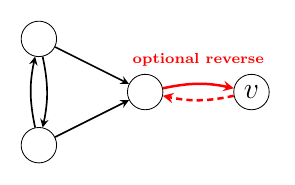}
        \caption{Common child.}
        \label{fig:motif_a}
    \end{subfigure}
    \hfill
    \begin{subfigure}[t]{0.21\columnwidth}
        \centering
        \includegraphics[page=4,width=\linewidth]{Figures/motifs_4v.pdf}
        \caption{Common parent.}
        \label{fig:motif_b}
    \end{subfigure}
    \hfill
    \begin{subfigure}[t]{0.21\columnwidth}
        \centering
        \includegraphics[page=13,width=\linewidth]{Figures/motifs_4v.pdf}
        \caption{$2$-cycles intersection.}
        \label{fig:motif_c}
    \end{subfigure}
    \hfill
    \begin{subfigure}[t]{0.21\columnwidth}
        \centering
        \includegraphics[page=10,width=\linewidth]{Figures/motifs_4v.pdf}
        \caption{$2$-cycles intersection.}
        \label{fig:motif_d}
    \end{subfigure}
    
    \begin{subfigure}[t]{0.12\columnwidth}
        \centering
        \includegraphics[page=8,width=\linewidth]{Figures/motifs_4v.pdf}
        \caption{Coupling two $2$-cycles.}
        \label{fig:motif_e}
    \end{subfigure}
    \begin{subfigure}[t]{0.18\columnwidth}
        \centering
        \includegraphics[page=24,width=\linewidth]{Figures/AAAI_2027-38.pdf}
        \caption{Asymmetric extension.}
        \label{fig:motif_f}
    \end{subfigure}

    \caption{Four-node motifs that are generically identifiable. Red arrows indicate admissible extensions involving the added vertex $v$. A solid red arrow is required, while a dashed red arrow denotes an optional choice. Blue lines indicate adjacency whose orientation is arbitrary. For a set of blue adjacencies, at least one of the indicated blue connections must be present, in either direction.}
    \label{fig:four_node_motifs}
\end{figure}

The extensions in Figures~\ref{fig:motif_a}--\ref{fig:motif_d} illustrate the configurations summarized in Proposition~\ref{prop:one-vertex-extensions}. These examples show that identifiability is not determined by adjacency alone. Rather, the added vertex must alter the covariance structure so that the missing rank contribution of the deficient three-node motif is recovered.

Figure~\ref{fig:motif_e} illustrates an additional rank-repair mechanism. Here, the graph consists of two vertex-disjoint directed $2$-cycles coupled by directed edges between their vertices. Although the two directed $2$-cycles are deficient when considered separately, their coupling as shown in Figure~\ref{fig:motif_e} renders the combined four-node graph generically identifiable.

Moreover, one of the connecting edges may be removed without destroying identifiability. Removing such an edge adds the corresponding non-edge row to
$\mathbf{H}(\boldsymbol{\Sigma})_{E^c,\cdot}$, while the remaining connection preserves the treks responsible for the covariance entries needed to obtain full column rank.

\subsection{Embedding the Motifs in Larger Graphs}

The identifiable configurations above need not occur as isolated four-node graphs. The same block-rank argument can be used when several such motifs are embedded in a larger graph.

Suppose that the graph contains a collection of vertex-disjoint four-node motifs of the types described above, no vertex belongs to more than one such motif, and the subgraph induced by the remaining vertices is generically identifiable. Assume additionally that no pair of vertices belonging to two distinct blocks---either two different four-node motifs, or a four-node motif and the remaining subgraph---forms a directed $2$-cycle.

Then the same block-rank construction used in the proof of Theorem~\ref{thm:3v_motif} extends to the full graph. The motif-local blocks have full column rank by the results above, the remaining block has full column rank by assumption, and the columns involving vertices from distinct blocks can be handled by selecting suitable non-edge rows as in the mixed-block argument of Theorem~\ref{thm:3v_motif}. Hence, the complete graph is generically identifiable.

\subsection{Failed Extensions}

The previous examples also show that introducing an additional vertex does not necessarily repair the rank deficiency. Figure~\ref{fig:non_four_node_motifs} shows representative four-node extensions that remain non-identifiable.

\begin{figure}[h]
    \centering
    \begin{subfigure}[t]{0.2\columnwidth}
        \centering
        \includegraphics[page=17,width=\linewidth]{Figures/motifs_4v.pdf}
        \caption{Duplicate common child.}
        \label{fig:non_motif_a}
    \end{subfigure}
    \begin{subfigure}[t]{0.2\columnwidth}
        \centering
        \includegraphics[page=18,width=\linewidth]{Figures/motifs_4v.pdf}
        \caption{Duplicate common parent.}
        \label{fig:non_motif_b}
    \end{subfigure}
    \begin{subfigure}[t]{0.2\columnwidth}
        \centering
        \includegraphics[page=19,width=\linewidth]{Figures/motifs_4v.pdf}
        \caption{Common-child failed extension.}
        \label{fig:non_motif_c}
    \end{subfigure}
    \caption{Four-node extensions that remain non-identifiable.}
    \label{fig:non_four_node_motifs}
\end{figure}

The condition that the added structure creates a relevant trek connection with the vertices in the directed $2$-cycle is essential. Consider, for example, the common-child motif and the extension shown in Figure~\ref{fig:non_motif_c}. In this configuration, there is no trek from the added vertex $v$ to either vertex of the directed $2$-cycle $(i,j)$. By the trek rule,
$\Sigma_{vi}=\Sigma_{vj}=0$. Consequently, the additional rows involving $v$ do not introduce new covariance entries capable of repairing the missing rank contribution. The system becomes larger, but the original rank deficiency persists.

Extensions that merely duplicate the common-child or common-parent structures also fail to restore identifiability. As shown in Figures~\ref{fig:non_motif_a} and~\ref{fig:non_motif_b}, these symmetric attachments reproduce constraints already present in
$\mathbf{H}(\boldsymbol{\Sigma})_{E^c,\cdot}$ and therefore do not supply an additional independent rank contribution.

In contrast, the asymmetric extension shown in Figure~\ref{fig:motif_f} breaks this redundancy. The asymmetry changes the trek structure and introduces an additional independent covariance constraint, thereby restoring generic identifiability.

Overall, these examples reinforce the interpretation of Proposition~\ref{prop:one-vertex-extensions}: the decisive feature of an extension is not the mere presence of an additional node or edge, but whether the resulting trek structure generates new covariance information that increases the rank of $\mathbf{H}(\boldsymbol{\Sigma})_{E^c,\cdot}$.

\section{Trek graphs and Proposition~2}

This section provides the graphical interpretation of
Proposition~2 in the main paper. In particular, we explain how
conditional independence in a stationary diffusion can be read from the trek graph and why the condition $X_h \perp X_x \mid X_{O\setminus\{x\}}$ eliminates the marginalization term associated with a latent parent $h$ of $y$.

\paragraph{Treks and the trek graph.}

\begin{definition}[Trek graph {\cite{boege2025conditional}}]
\label{def:trek-graph}
Let \(G=(V,E)\) be the directed drift graph. Its \emph{trek graph}
\(\widehat G=(V,\widehat E)\) is the bidirected graph satisfying
\[
x\leftrightarrow y\in\widehat E
\quad\Longleftrightarrow\quad
\operatorname{An}_G(x)\cap\operatorname{An}_G(y)\neq\emptyset,
\qquad x\neq y.
\]
Equivalently, \(x\leftrightarrow y\in\widehat E\) if and only if
there exists a trek between \(x\) and \(y\) in \(G\).
\end{definition}

The equivalence in Definition~\ref{def:trek-graph} follows directly from the definition of a trek. If a trek between \(x\) and \(y\) has top \(c\), then \(c\in\operatorname{An}_G(x)\cap\operatorname{An}_G(y)\). Conversely, any
\(c\in\operatorname{An}_G(x)\cap\operatorname{An}_G(y)\) gives two directed paths \(c\rightsquigarrow x\) and \(c\rightsquigarrow y\), which together form a trek between \(x\) and \(y\).

\begin{definition}[Separation in the trek graph]
\label{def:trek-separation}
Let \(I,J,S\subseteq V\) be pairwise disjoint. We write
\(I\perp_{\widehat G}J\mid S\)
if every path in \(\widehat G\) from a vertex in \(I\) to a vertex
in \(J\) intersects \(S\).
\end{definition}

Since the linear Ornstein--Uhlenbeck model considered in this paper is a special case of the stationary diffusions covered by Theorem~1.2 \cite{boege2025conditional}, the separation condition in its first statement implies the corresponding conditional independence in our model.

\begin{corollary}[Graphical condition for Proposition~2]
\label{cor:prop2-separation}
Let \(V=O\mathbin{\dot\cup}L\), let \(h\in L\), and let \(x\in O\).
If \(\{h\}\perp_{\widehat G}\{x\}\mid L\setminus\{h\}\), then
\(X_h\perp X_x\mid X_{O\setminus\{x\}}\).
\end{corollary}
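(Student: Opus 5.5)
This corollary should follow by direct application of the separation theorem of \cite{boege2025conditional} (Theorem~1.2, first statement) to the stationary Ornstein--Uhlenbeck process, with some care about which conditioning set appears. The plan is to translate the trek-graph separation hypothesis $\{h\}\perp_{\widehat G}\{x\}\mid L\setminus\{h\}$ into a conditional independence statement for the stationary Gaussian vector $\mathbf X$. Then we use Gaussianity to identify this statement with the vanishing of the regression coefficient $(\Gamma_{L\mid O})_{hx}$, which is what Proposition~2 actually needs.

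\textbf{Step 1: invoke the cited theorem.} The linear SDE with stable $\mathbf M$ and diagonal positive definite $\mathbf C$ is a stationary diffusion in the class covered by \cite{boege2025conditional}, as noted just before the corollary. That theorem states that separation in $\widehat G$ implies the corresponding conditional independence, so we obtain the statement with conditioning set exactly $S=L\setminus\{h\}$.

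\textbf{Step 2: pass to the conditioning set $O\setminus\{x\}$.} The conclusion asks for independence given $O\setminus\{x\}$, not given $L\setminus\{h\}$, so we need a bridge. Since $\widehat G$ is a bidirected graph encoding only ancestral overlap, we read its global Markov property in the marginal-independence (covariance-graph) sense. In bidirected graphs, separation of $I$ and $J$ by $S$ yields $X_I\perp X_J\mid X_{V\setminus(I\cup J\cup S)}$. Taking $I=\{h\}$, $J=\{x\}$, $S=L\setminus\{h\}$, the complementary set is $V\setminus(\{h,x\}\cup L\setminus\{h\})=O\setminus\{x\}$. This is exactly the required conditioning set.

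So the first check is to verify that the cited Theorem~1.2 is stated in this bidirected form, with conditioning on the complement of $I\cup J\cup S$. If instead it conditions on $S$ itself, we would argue via Gaussian covariance-graph theory: separation implies that the blocks $h$ and $x$ lie in different connected components after removing $S$. Hence $\Sigma$ restricted to $V\setminus S$ is block diagonal across those components, and the complementary conditional independence follows.

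\textbf{Step 3 and main obstacle.} The delicate point is precisely matching the cited theorem's formulation (conditioning set $S$ versus complement) and confirming that the trek rule gives $\Sigma_{uv}=0$ for non-adjacent $u,v$ in $\widehat G$. The latter follows from the trek decomposition of $\Sigma$ noted in the preliminaries. Once the covariance is block diagonal between the components of $\widehat G$ minus $S$ containing $h$ and $x$, the Schur complement computation gives
\[
\Sigma_{hx\cdot O\setminus\{x\}}=0 .
\]
By Gaussianity this is the claimed conditional independence, which is what Proposition~2 uses.
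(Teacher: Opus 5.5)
Your proposal is correct, and its main line is the paper's proof: apply Theorem~1.2 of \cite{boege2025conditional} in its bidirected (covariance-graph) form with $I=\{h\}$, $J=\{x\}$, $K=O\setminus\{x\}$, so that the separating set $V\setminus(I\cup J\cup K)$ is exactly $L\setminus\{h\}$. Your backup argument is in fact a complete, citation-free proof on its own, valid for every admissible parameter because $\mathbf C$ is diagonal. In it, vertices in different components of $\widehat G$ minus $L\setminus\{h\}$ have no trek, so $\Sigma_{O\cup\{h\},\,O\cup\{h\}}$ is block diagonal with $h$ and $x$ in different blocks, and hence $\Sigma_{hx\cdot O\setminus\{x\}}=0$; this makes the worry about the cited theorem's exact formulation moot.
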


\begin{proof}
Apply Theorem~1.2 \cite{boege2025conditional} with \(I=\{h\}\), \(J=\{x\}\), and \(K=O\setminus\{x\}\). Because \(V=O\mathbin{\dot\cup}L\),
\[
\begin{aligned}
V\setminus(I\cup J\cup K)
&=
V\setminus
\left(
\{h\}\cup\{x\}\cup(O\setminus\{x\})
\right)\\
&=
V\setminus(\{h\}\cup O)\\
&=
L\setminus\{h\}.
\end{aligned}
\]
The claim therefore follows from Theorem~1.2 \cite{boege2025conditional}.
\end{proof}

\begin{lemma}
\label{lem:regression-coefficient}
Let \(S=O\setminus\{x\}\). Then
\[
(\Gamma_{L\mid O})_{hx}
=
\frac{
\Sigma_{hx}
-
\Sigma_{hS}\Sigma_{SS}^{-1}\Sigma_{Sx}
}{
\Sigma_{xx}
-
\Sigma_{xS}\Sigma_{SS}^{-1}\Sigma_{Sx}
}.
\]
Consequently,
\(X_h\perp X_x\mid X_S
\Longrightarrow
(\Gamma_{L\mid O})_{hx}=0\).
\end{lemma}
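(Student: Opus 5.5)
The formula is the standard partial-regression (Frisch–Waugh) identity for one entry of $\boldsymbol{\Gamma}_{L\mid O}=\boldsymbol{\Sigma}_{LO}\boldsymbol{\Sigma}_{OO}^{-1}$. I would prove it by block inversion of $\boldsymbol{\Sigma}_{OO}$ after ordering the observed coordinates as $O=(S,x)$ with $S=O\setminus\{x\}$. Since $\boldsymbol{\Sigma}\succ 0$, both $\boldsymbol{\Sigma}_{OO}$ and $\boldsymbol{\Sigma}_{SS}$ are positive definite. The Schur complement
\[
s:=\Sigma_{xx}-\Sigma_{xS}\Sigma_{SS}^{-1}\Sigma_{Sx}
\]
is therefore strictly positive, because it is the conditional variance of $X_x$ given $X_S$. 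This justifies dividing by it.

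The key step is to read off the $x$-column of $\boldsymbol{\Sigma}_{OO}^{-1}$ from the block inverse formula:
\[
\left(\boldsymbol{\Sigma}_{OO}^{-1}\right)_{\cdot,x}
=\frac{1}{s}\begin{pmatrix}-\Sigma_{SS}^{-1}\Sigma_{Sx}\\ 1\end{pmatrix}.
\]
Multiplying on the left by the row $\boldsymbol{\Sigma}_{hO}=(\Sigma_{hS},\Sigma_{hx})$ gives
\[
(\Gamma_{L\mid O})_{hx}=\frac{\Sigma_{hx}-\Sigma_{hS}\Sigma_{SS}^{-1}\Sigma_{Sx}}{s},
\]
which is the claimed formula. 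Equivalently, the entry is the coefficient of $X_x$ in the linear regression of $X_h$ on $X_O$. This is a purely algebraic computation.

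For the consequence, I would use that the stationary distribution is jointly Gaussian, as noted in the preliminaries. For Gaussian vectors, the conditional covariance of $(X_h,X_x)$ given $X_S$ is the Schur complement of $\boldsymbol{\Sigma}_{SS}$, and its off-diagonal entry is exactly the numerator $\Sigma_{hx}-\Sigma_{hS}\Sigma_{SS}^{-1}\Sigma_{Sx}$. Conditional independence $X_h\perp X_x\mid X_S$ forces this conditional covariance to vanish, so the numerator is zero and $(\Gamma_{L\mid O})_{hx}=0$.

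The only point needing care is the positivity of $s$, which follows from positive definiteness of $\boldsymbol{\Sigma}$. This holds because $\mathbf{C}\succ 0$ and $\mathbf{M}$ is stable, via the integral representation $\boldsymbol{\Sigma}=\int_0^\infty e^{t\mathbf{M}}\mathbf{C}e^{t\mathbf{M}^\top}\,dt$ used in the proof of Theorem~\ref{thm:3v_motif}. I do not expect any step to be a real obstacle.
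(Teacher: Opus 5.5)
Your proposal is correct and follows the paper's proof: both apply the block-inverse formula to $\boldsymbol{\Sigma}_{OO}$ under the partition $O=\{x\}\mathbin{\dot\cup}S$, identify the numerator as the conditional covariance $\Sigma_{hx\mid S}$ and the denominator as the positive conditional variance $\Sigma_{xx\mid S}$, and then use Gaussianity to conclude that $X_h\perp X_x\mid X_S$ forces the numerator to vanish. Your explicit computation of the $x$-column of $\boldsymbol{\Sigma}_{OO}^{-1}$ and your justification of $\boldsymbol{\Sigma}\succ 0$ fill in details that the paper leaves implicit.
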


\begin{proof}
Since \(\Gamma_{L\mid O}=\Sigma_{LO}\Sigma_{OO}^{-1}\), the expression follows from the block inverse formula applied to the partition \(O=\{x\}\mathbin{\dot\cup}S\). The numerator is the conditional covariance \(\Sigma_{hx\mid S}=\Sigma_{hx}-\Sigma_{hS}\Sigma_{SS}^{-1}\Sigma_{Sx}\), and the denominator is the conditional variance
\(\Sigma_{xx\mid S}>0\). Since the stationary distribution is Gaussian,
\(X_h\perp X_x\mid X_S
\Longleftrightarrow
\Sigma_{hx\mid S}=0\).
\end{proof}

So, suppose \(h\to y\) and \(x\to y\), where \(h\in L\) and \(x,y\in O\).
Then \(\widehat G\) contains \(h-y-x\), because each directed edge is a trek. The required separator is
\(L\setminus\{h\}\), but the internal vertex \(y\) is observed. Therefore,
\(\{h\}\not\perp_{\widehat G}\{x\}\mid L\setminus\{h\}\),
so Theorem~1.2 \cite{boege2025conditional} does not guarantee
\(X_h\perp X_x\mid X_{O\setminus\{x\}}\).
This motivates the instrumental-variable approach.

\section{Computational details}

All experiments were implemented in Python 3.13 using NumPy, SciPy, NetworkX, and Matplotlib. Computations were performed on a MacBook Air equipped with an Apple M4 processor and 16 GB of unified memory.

The complete source code used to generate all synthetic experiments, together with the parameter initializations and implementation details, is included in the supplementary material.

No trajectories were simulated in any experiment. All evaluations were performed at the population level. For every generated model, the stationary covariance matrix was obtained directly by solving the continuous Lyapunov equation
\[
\textbf{M}\boldsymbol{\Sigma}+\boldsymbol{\Sigma} \textbf{M}^\top+\textbf{C}=0.
\]

\subsection{Generic Identifiability}

The numerical experiments investigate generic identifiability of individual entries of the drift matrix for Theorem 1. The same procedure was used for the graphical configurations displayed in Figure~2.

\subsubsection{Graph Generation}

We considered directed graph structures containing one or more directed $2$-cycles. Depending on the experiment, the graphs were either fixed to represent the local configurations shown in Figure~2 or generated according to the graphical assumptions under investigation. For each directed $2$-cycle, additional adjacent vertices and edges were introduced according to the corresponding graph configuration. All remaining entries of the drift graph were set to zero unless explicitly included in the construction.

\subsubsection{Model Specification}

For each experiment, we fixed a directed graph and constructed the
corresponding symbolic drift matrix \(\textbf{M}\). An off-diagonal entry \(m_{vu}\) was treated as a free parameter precisely when the graph contained the edge \(u\to v\). The diagonal entries of \(\textbf{M}\) were also treated as free parameters. The diffusion matrix \(\textbf{C}\) was restricted to be diagonal with strictly positive entries.

Where required by the graphical configuration, selected covariance entries were constrained to zero before solving the Lyapunov equations.

\subsubsection{Parameter Sampling}

For each trial, positive parameters were sampled independently from
\((10^{-10},10]\), negative parameters from \([-10,-10^{-10})\), and unconstrained off-diagonal drift parameters from \([-10,10]\). A sampled drift matrix was retained only when all of its eigenvalues had strictly negative real parts. Up to \(100\) samples were drawn to obtain a Hurwitz drift matrix for each trial.

\subsubsection{Covariance Computation}

For every accepted drift matrix, the stationary covariance entries were obtained by solving the linear system induced the continuous Lyapunov equation. Samples for which the system was inconsistent or the resulting covariance matrix was not positive definite were discarded.

\subsubsection{Numerical Identifiability Test}

After fixing the generated stationary covariance matrix and the diagonal diffusion parameters, identifiability was tested separately for each free entry of the drift matrix. We minimized and maximized the selected parameter subject to the Lyapunov equations, while allowing the remaining drift parameters to vary within \([-10,10]\). Both optimization problems were solved using linear programming.

Let \(\underline m_{vu}\) and \(\overline m_{vu}\) denote the resulting minimum and maximum feasible values. The parameter \(m_{vu}\) was classified as numerically identifiable whenever
\[
\left|\overline m_{vu}-\underline m_{vu}\right|<10^{-8}.
\]
For each graphical configuration and each drift parameter, this procedure was repeated over \(1000\) random trials. We report the proportion of valid trials for which the parameter was classified as identifiable.

\subsection{Instrumental Variables}

\begin{figure}[t]
    \centering

    \begin{minipage}[t]{0.3\columnwidth}
        \centering
        \includegraphics[width=\linewidth]{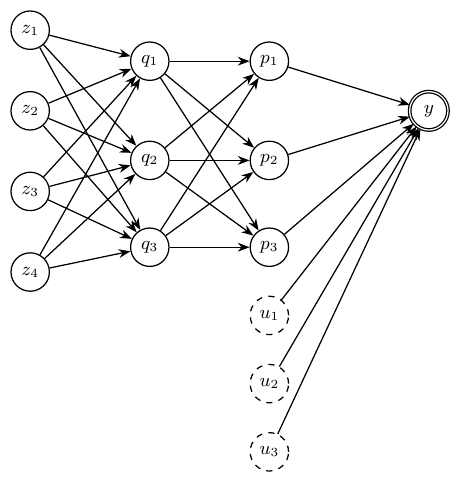}

        \small (a) Dense configuration.
    \end{minipage}
    \begin{minipage}[t]{0.3\columnwidth}
        \centering
        \includegraphics[width=\linewidth]{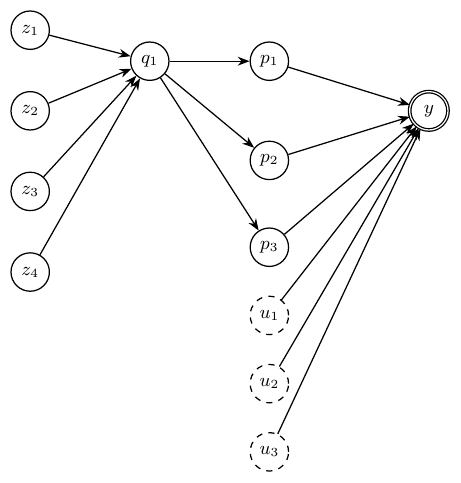}

        \small (b) Bottleneck configuration.
    \end{minipage}

    \caption{IV setup}
    \label{fig:iv_setup}
\end{figure}

\subsubsection{Graph Configurations}

We considered fixed graph configurations containing a target vertex \(y\), observed parents \(P\), latent parents \(U\), and root instruments \(Z\). 
We considered 
configurations violating the bottleneck (reported in the main paper). The graph structure was fixed within each configuration, while the numerical model parameters were independently resampled.

Figure~\ref{fig:iv_setup} illustrates two representative instrumental-variable configurations. The left panel shows a sufficient configuration for the full-rank condition of Theorem~5, whereas the right panel illustrates a bottleneck configuration with bottleneck ratio $r=1/3$.

\subsubsection{Parameter Initialization}

For every repetition, the diagonal entries of the drift matrix were sampled independently from $[-10,-10^{-8}]$, and the coefficient associated with each directed edge was sampled independently from
$[-10,10]$. All remaining off-diagonal entries were set to zero. Samples were repeatedly generated until the resulting drift matrix was Hurwitz. The diffusion matrix was diagonal, with entries sampled independently from $[10^{-8},10]$.

The stationary covariance matrix was computed directly by solving
the continuous Lyapunov equation. No trajectories or finite observational samples were generated.

\subsubsection{Recovery Procedure}

Only the observed covariance matrix \(\boldsymbol{\Sigma}_{OO}\) was supplied to the recovery procedure. Since the instruments are roots, their diagonal drift coefficients were obtained from $m_{zz}=-\frac{c_{zz}}{2\Sigma_{zz}}$, and the corresponding vector \(b\) was constructed from observed covariance entries. The target coefficients and the self-drift of \(y\) were then estimated jointly by solving
\[
\begin{bmatrix}
\boldsymbol{\Sigma}_{P'Z}^{\top} & \boldsymbol{\Sigma}_{yZ}^{\top}
\end{bmatrix}
\begin{bmatrix}
\widehat m_{yP'}\\
\widehat m_{yy}
\end{bmatrix}
=-b
\]
using least squares.

\subsubsection{Evaluation Metric}

Recovery accuracy was evaluated using the relative $\ell_2$ error between the estimated and true vectors of target drift coefficients,
$\frac{\|\hat m_{yP'}-m_{yP'}\|_2}{\|m_{yP'}\|_2}$, where \(m_{yP'}\) denotes the true vector of target drift coefficients and \(\hat m_{yP'}\) its estimate. The reported results correspond to the mean relative error across 100 independent repetitions.

\subsection{Restrictiveness of Theorem~1}

To assess how restrictive the sufficient graphical condition of Theorem~1 is relative to the kernel-rank criterion of Lemma~1, we generated random directed graphs with
\(|V|\in\{6,8,10,12\}\). Each possible off-diagonal directed edge was included independently with probability
\(p\in\{0.05,0.10,0.15,0.20,0.25,0.30\}\).
For each combination of \(|V|\) and \(p\), we retained \(500\) graphs containing at least one directed \(2\)-cycle; graphs without directed \(2\)-cycles were excluded. For each retained graph, we checked whether the graphical conditions of Theorem~1 were satisfied.  

We then numerically evaluated the generic kernel-rank condition of Lemma~1. For each directed edge \(u\to v\), the corresponding drift coefficient \(m_{vu}\) was sampled independently from a standard normal distribution. The diagonal of the resulting drift matrix was shifted uniformly so that its spectral abscissa was at most \(-1\), thereby ensuring that the drift matrix was Hurwitz. The diffusion matrix was diagonal, with its diagonal entries sampled independently from \([0.5,2]\). The stationary covariance matrix \(\boldsymbol{\Sigma}\) was then obtained by solving the continuous Lyapunov equation.

Given \(\boldsymbol{\Sigma}\), we constructed
\(\textbf{H}(\boldsymbol{\Sigma})_{E^c,\cdot}\), whose rows correspond to directed non-edges \(i\to j\notin E\) and whose columns correspond to unordered vertex pairs \((k,l)\), \(k<l\). Its entries were computed as
\[
H(\boldsymbol{\Sigma})_{i\to j,(k,l)}
=
\begin{cases}
-\Sigma_{lj}, & i=k,\\
\Sigma_{kj}, & i=l,\\
0, & \text{otherwise}.
\end{cases}
\]
Graphs with fewer than \(|V|(|V|-1)/2\) directed non-edges were immediately classified as not having full kernel rank.

For the remaining graphs, the rank of
\(\textbf{H}(\boldsymbol{\Sigma})_{E^c,\cdot}\) was computed from its singular values. A singular value was treated as numerically nonzero when it exceeded
\[
\tau
=
100\,\max\{\operatorname{rows}(\textbf{H}),\operatorname{cols}(\textbf{H})\}
\,\epsilon_{\mathrm{mach}}\,s_{\max},
\]
where \(\epsilon_{\mathrm{mach}}\) is machine precision and \(s_{\max}\) is the largest singular value. Full kernel rank was declared when the resulting numerical rank equaled \(|V|(|V|-1)/2\). To reduce sensitivity to exceptional parameter choices, up to three independent parameter draws were tested for each graph, and the graph was classified as numerically full rank if the required rank was attained for at least one draw.

Finally, for each \((|V|,p)\) combination, we report the coverage of Theorem~1 among numerically full-rank graphs, defined as the fraction of graphs satisfying the graphical condition of Theorem~1 among those attaining numerical full kernel rank. The random number generator was initialized with seed \(42\).

\begin{figure}[t]
    \centering
    \includegraphics[width=0.6\columnwidth]
    {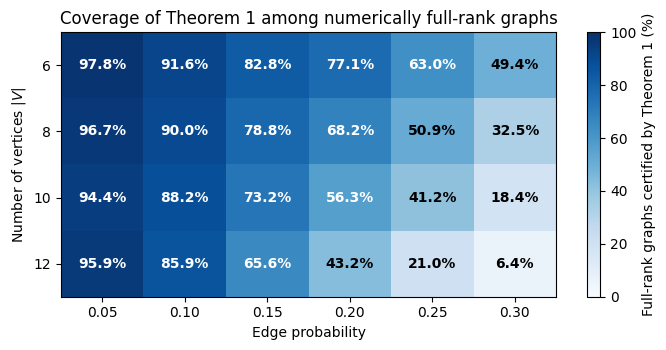}
    \caption{Coverage of the sufficient graphical condition in
    Theorem~1. Each cell reports the percentage of randomly generated
    non-simple graphs with numerical full kernel rank that are certified by Theorem~1, for different numbers of vertices $|V|$ and edge probabilities. 
    .}
    
    \label{fig:theorem1_coverage}
\end{figure}

Figure~\ref{fig:theorem1_coverage} shows that Theorem~1 captures most numerically full-rank graphs in the sparse regime. At \(p=0.05\), coverage ranges from \(94.4\%\) to \(97.8\%\) across the considered graph sizes. As the graphs become denser, the sufficient condition becomes increasingly conservative. For example, for \(|V|=12\), coverage decreases from \(95.9\%\) at \(p=0.05\) to \(6.4\%\) at \(p=0.30\). This behavior is consistent with the structural requirements of Theorem~1: denser graphs contain more overlapping directed \(2\)-cycles, which fall outside its vertex-disjoint construction even when the kernel matrix has full column rank.

\end{document}